\def\eqref#1{equation~\ref{#1}}
\def\1{\bm{1}}
\def\va{{\bm{a}}}
\def\mI{{\bm{I}}}
\DeclareMathAlphabet{\mathsfit}{\encodingdefault}{\sfdefault}{m}{sl}
\SetMathAlphabet{\mathsfit}{bold}{\encodingdefault}{\sfdefault}{bx}{n}
\newcommand{\KL}{D_{\mathrm{KL}}}
\newcommand{\name}{NDQ}
\newcommand{\Tau}{\mathrm{T}}
\title{Learning Nearly Decomposable Value Functions  Via Communication Minimization}
\newcommand{\printfnsymbol}[1]{%
  \textsuperscript{\@fnsymbol{#1}}%
}
\author{Tonghan Wang\textsuperscript{1}\thanks{Equal Contribution.}, Jianhao Wang\textsuperscript{1}\printfnsymbol{1}, Chongyi Zheng\textsuperscript{2} \& Chongjie Zhang\textsuperscript{1} \\
\textsuperscript{1}Institute for Interdisciplinary Information Sciences, Tsinghua University, Beijing, China\\
\textsuperscript{2}Turing AI Institute of Nanjing, Nanjing, China\\
\texttt{tonghanwang1996@gmail.com, wjh720.eric@gmail.com} \\
\texttt{chongyeezheng@gmail.com, chongjie@tsinghua.edu.cn}
}
\declaretheorem[name=Theorem]{Theorem}
\newcommand{\shortn}{\textup{\texttt{-}}}
\newcommand{\shorte}{\textup{\texttt{=}}}
\begin{document}

\maketitle

\begin{abstract}
Reinforcement learning encounters major challenges in multi-agent settings, such as scalability and non-stationarity. Recently, value function factorization learning emerges as a promising way to address these challenges in collaborative multi-agent systems. However, existing methods have been focusing on learning fully decentralized value functions, which are not efficient for tasks requiring communication. To address this limitation, this paper presents a novel framework for learning \emph{nearly decomposable Q-functions} (NDQ) via communication minimization, with which agents act on their own most of the time but occasionally send messages to other agents in order for effective coordination. This framework hybridizes value function factorization learning and communication learning by introducing two information-theoretic regularizers. These regularizers are maximizing mutual information between agents' action selection and communication messages while minimizing the entropy of messages between agents. We show how to optimize these regularizers in a way that is easily integrated with existing value function factorization methods such as QMIX. Finally, we demonstrate that, on the StarCraft unit micromanagement benchmark, our framework significantly outperforms baseline methods and allows us to cut off more than $80\%$ of communication without sacrificing the performance. The videos of our experiments are available at \url{https://sites.google.com/view/ndq}.

\end{abstract}

\section{Introduction}
Cooperative multi-agent reinforcement learning (MARL) are finding applications in many real-world domains, such as autonomous vehicle teams~\citep{cao2012overview}, intelligent warehouse systems~\citep{nowe2012game}, and sensor networks~\citep{zhang2011coordinated}. To help address these problems, recent years have made a great progress in MARL methods~\citep{lowe2017multi, foerster2018counterfactual, rashid2018qmix, jaques2019social}. Among these successes, the paradigm of centralized training with decentralized execution has attracted much attention for its scalability and ability to deal with non-stationarity.

Value function decomposition methods provide a promising way to exploit such paradigm. They learn a decentralized Q function for each agent and use a mixing network to combine these local Q values into a global action value. In previous works, VDN~\citep{sunehag2018value}, QMIX~\citep{rashid2018qmix}, and QTRAN~\citep{son2019qtran} have progressively enlarged the family of functions that can be represented by the mixing network. Despite their increasing ability in terms of value factorization representation, existing methods have been focusing on learning full decomposition, where each agent acts upon its local observations.
However, many multi-agent tasks in the real world are not fully decomposable -- agents sometimes require information from other agents in order to effectively coordinate their behaviors. This is because partial observability and stochasticity in a multi-agent environment can exacerbate an agent's uncertainty of other agents' states and actions during decentralized execution, which may result in catastrophic miscoordination.

To address this limitation, this paper presents a scalable multi-agent learning framework for learning \emph{nearly decomposable Q-functions} (NDQ) via communication minimization, with which agents act on their own most of the time but occasionally send messages to other agents in order for effective coordination. This framework hybridizes value function factorization learning and communication learning by introducing an information-theoretic regularizer for maximizing mutual information between agents' action selection and communication messages. Messages are parameterized in a stochastic embedding space. To optimize communication, we introduce an additional information-theoretic regularizer to minimize the entropy of messages between agents. With these two regularizers, our framework implicitly learn when, what, and with whom to communicate and also ensure communication to be both \emph{expressive} (i.e.,  effectively reducing the uncertainty of agents' action-value functions) and \emph{succinct} (i.e., only sending useful and necessary information). To optimize these regularizers, we derive a variational lower bound objective, which is easily integrated with existing value function factorization methods such as QMIX.

We demonstrate the effectiveness of our learning framework on StarCraft II\footnote{StarCraft and StarCraft II are trademarks of Blizzard Entertainment\textsuperscript{TM}.} unit micromanagement benchmark used in~\citet{foerster2017stabilising, foerster2018counterfactual, rashid2018qmix, samvelyan2019starcraft}. Empirical results show that NDQ significantly outperforms baseline methods and allows to cut off more than $80\%$ communication without sacrificing the performance. We also observe that agents can effectively learn to coordinate their actions at the cost of sending one or two bits of messages even in complex StarCraft II tasks. 

\section{Background}\label{sec:background}
In our work, we consider a fully cooperative multi-agent task that can be modelled by a Dec-POMDP~\citep{oliehoek2016concise} $G=\langle I, S, A, P, R, \Omega, O, n, \gamma\rangle$, where $I\equiv\{1,2,...,n\}$ is the finite set of agents. $s\in S$ is the true state of the environment from which each agent $i$ draws an individual partial observation $o_i\in \Omega$ according to the observation function $O(s, i)$. Each agent has an action-observation history $\tau_i\in \Tau\equiv(\Omega\times A)^*$. At each timestep, each agent $i$ selects an action $a_i\in A$, forming a joint action $\va \in A^n$, resulting in a shared reward $r=R(s,\va)$ for each agent and the next state $s'$ according to the transition function $P(s'|s, a)$. The joint policy $\bm{\pi}$ induces a joint action-value function: $Q_{tot}^{\bm{\pi}}(\bm\tau, \va)=\mathbb{E}_{s_{0:\infty},\va_{0:\infty}}[\sum_{t=0}^\infty \gamma^{t}r_t|s_0\shorte s,\va_0\shorte \va,\bm{\pi}]$, where $\bm\tau$ is the joint action-observation history and $\gamma\in[0, 1)$ is the discount factor.

Learning the optimal action-value function encounters challenges in multi-agent settings. On the one hand, to properly coordinate actions of agents, learning a centralized action-value function $Q_{tot}$ seems a good choice. However, such a function is difficult to learn when the number of agents is large. On the other hand, directly learning decentralized action-value function $Q_i$ for each agent alleviates the scalability problem~\citep{tan1993multi, tampuu2017multiagent}. Nevertheless, such independent learning method largely neglects interactions among agents, which often results in miscoordination and inferior performance.

In between, value function factorization method provides a promising way to attenuate such dilemma by representing $Q_{tot}$ as a mixing of decentralized $Q_i$ conditioned on local information. Such method has shown their effectiveness on complex task~\citep{samvelyan2019starcraft}.

However, current value function factorization methods have been mainly focusing on full decomposition. Such decomposition reduces the complexity of learning $Q_{tot}$ by first learning independent $Q_i$ and putting the burden of coordinating actions on the mixing networks whose input is all $Q_i$'s and output is $Q_{tot}$. For many tasks with partial observability and stochastic dynamics, mixing networks are not sufficient to learn coordinated actions, regardless of how powerful its representation ability is. The reason is that full decomposition cuts off all dependencies among decentralized action-value functions and agents will be uncertain about states and actions of other agents. Such uncertainty will increase as time goes by and can result in severe miscoordination and arbitrarily worse performance during decentralized execution. 
\section{Methodology}\label{sec:algorithm}
In this section, we propose to learn \emph{nearly decomposable Q-functions} (NDQ) via communication minimization, a new framework to overcome the miscoordination issue of full factorization methods.
\begin{figure}
    \centering
    \includegraphics[width=0.9\linewidth]{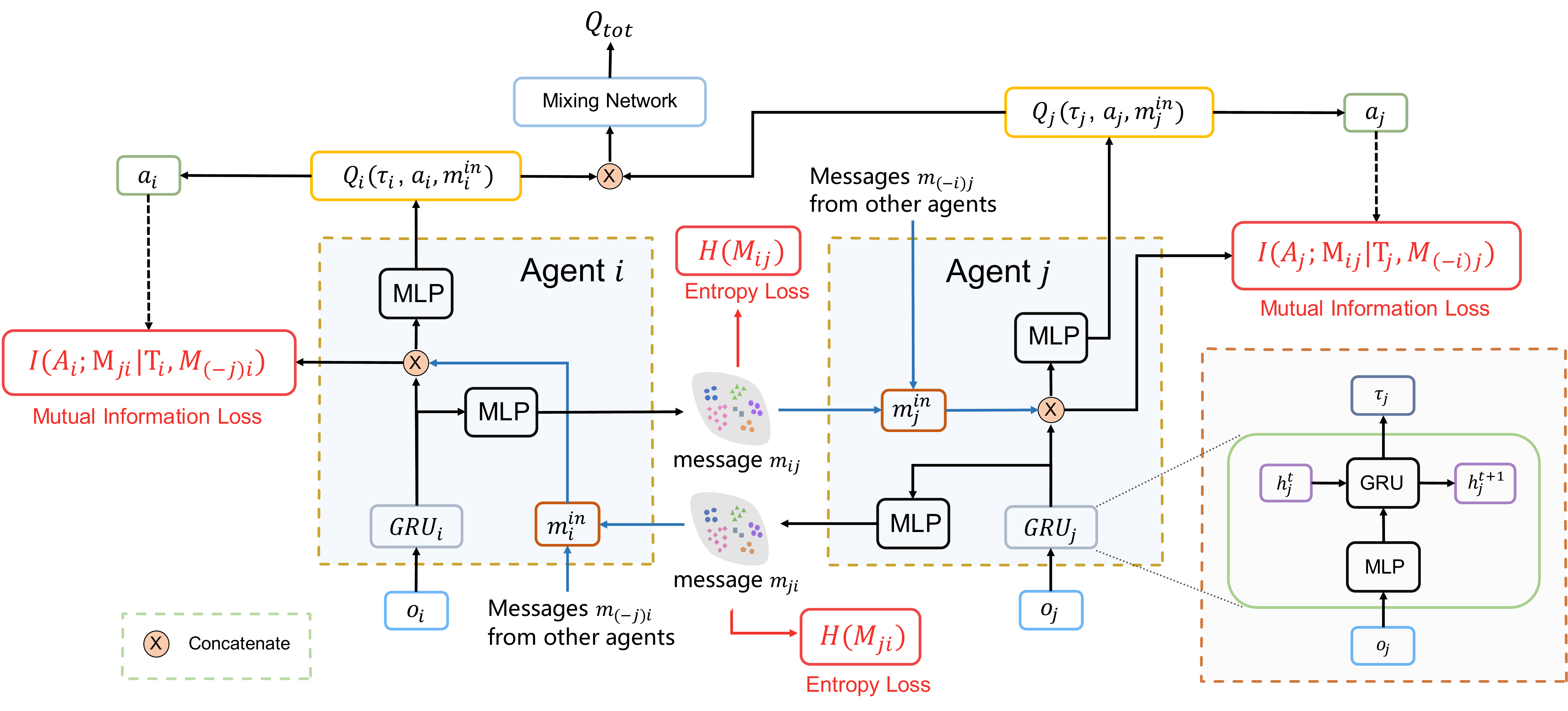}
    \caption{Schematics of our approach. The message encoder generates an embedding distribution that is sampled and concatenated with the current local history to serve as an input to the local action-value function. Local action values are fed into a mixing network to to get an estimation of the global action value.}
    \label{fig:learning_framework}
\end{figure}

In our learning framework (Fig.~\ref{fig:learning_framework}), individual action-value functions condition on local action-observation history and, at certain timesteps, messages received from a few other agents. Messages from agent $i$ to agent $j$ are drawn from a multivariate Gaussian distribution whose parameters are given by an encoder $f_m(\tau_i, j;\bm{\theta}_c)$, where $\tau_i$ is the local observation-action history of agent $i$, and $\bm{\theta}_c$ are parameters of the encoder $f_m$. Formally, message $m_{ij}\sim \mathcal{N}(f_m(\tau_i, j;\bm{\theta}_c), \mI)$, where $\mI$ is an identity matrix. Here we use an identity covariance matrix and the reasons will be discussed in the next section. $m_{(\shortn i)j}$ is used to denote the messages sent to $j$ from agents other than $i$. We learn a nearly decomposable structure via learning minimized communication. We thus expect the communication to have the following properties:

\textbf{i) Expressiveness:} The message passed to one agent should effectively reduce the uncertainty in its action-value function.

\textbf{ii) Succinctness:} Agents are expected to send messages as short as possible to the agents who need it and only when necessary.

To learn such a communicating strategy, we draw inspiration from variational inference for its proven ability in learning structure from data and endow a stochastic latent message space, which we also refer to as "message embedding". We impose constraints, which will be discussed in detail in the next section, on the latent message embedding to enable an agent to decide locally which bits in a message should be sent according to their utility in terms of helping other agents make decisions. Agent $j$ will receive an input message $m_j^{in}$ that has been selectively cut, on which it conditions the local action-value function $Q_j(\tau_j, a_j, m_j^{in})$. All the individual Q values are then fed into a mixing network such as that used by QMIX~\citep{rashid2018qmix}.

Apart from the constraints on the message embedding, all the components (the individual action-value functions, the message encoder, and the mixing network) are trained in an end-to-end manner by minimizing the TD loss. Thus, our overall objective is to minimize
\begin{equation}
    \mathcal{L}(\bm\theta) = \mathcal{L}_{TD}(\bm\theta) + \lambda \mathcal{L}_c(\bm{\theta}_c),
\end{equation}
where $\mathcal{L}_{TD}(\bm\theta) = \left[r + \gamma \max_{\va'} Q_{tot}(\bm\tau', \bm{a'}; \bm\theta^-)- Q_{tot}(\bm\tau, \bm{a}; \bm\theta)\right]^2$ ($\bm\theta^-$ are the parameters of a periodically updated target network as in DQN) is the TD loss, $\bm\theta$ are all parameters in the model, and $\lambda$ is a weighting term. We will discuss how to define and optimize $\mathcal{L}_c(\bm{\theta}_c)$ to regularize the message embedding in the next section.

\subsection{Minimized Communication Objective and Variational Bound}

Introducing latent variables facilitates the representation of the message, but it does not mean that the messages can reduce uncertainty in the action-value functions of other agents. To make message expressive, we maximize the mutual information between message and agent's action selection. Formally, we maximize $I_{\bm{\theta}_c}(A_j; M_{ij} | \Tau_j, M_{(\shortn i)j})$ where $A_j$ is agent $j$'s action selection, $\Tau_j$ is the random variable of the local action-observation history of agent $j$, $M_{ij}$ and $M_{(\shortn i)j}$ are random variables of $m_{ij}$ and $m_{(\shortn i)j}$. However, if this is the only objective, the encoder can easily learn to cheat by giving messages under different histories representations in different regions in the latent space, rendering cutting off useless messages difficult. A natural constraint to avoid such representations is to minimize the entropy of the messages. Therefore, our objective for optimizing communication of agent $i$ is to maximize:
\begin{equation}\label{equ:original_objective}
    J_c(\bm{\theta}_c) = \sum_{j=1}^n \left[I_{\bm{\theta}_c}(A_j; M_{ij} | \Tau_j, M_{(\shortn i)j}) - \beta H_{\bm{\theta}_c}(M_{ij})\right],
\end{equation}

where $\beta$ is a scaling factor trading expressiveness and succinctness.

This objective is appealing because it agrees exactly with the desiderata that we impose on the message embedding. However, optimizing this objective needs extra efforts because computation involving mutual information is intractable. By introducing a variational approximator, a popular technique from variational toolkit~\citep{alemi2017deep}, we can derive a lower bound for the mutual information term in Eq.~\ref{equ:original_objective} (a detailed derivation can be found in Appendix~\ref{appendix:vi_bound}):
\begin{equation}\label{equ:mi_lower_bound}
\begin{aligned}
    & I_{\bm{\theta}_c}(A_j; M_{ij} | \Tau_j, M_{(\shortn i)j}) \\
& \ \ge  \ \mathbb{E}_{\bm{\Tau} \sim \mathcal{D}, M_{j}^{in} \sim f_m(\bm{\Tau}, j;\bm{\theta}_c)} \left[ -\mathcal{CE}\left[ p(A_j | \bm{\Tau}) \Vert q_{\xi} (A_j | \Tau_j, M_{j}^{in}) \right] \right],
\end{aligned}
\end{equation}

where $\bm{\Tau}=\langle\Tau_1, \Tau_2, \dots,\Tau_n\rangle$ is the joint local history sampled from the replay buffer $\mathcal{D}$, $q_{\xi} (A_j | \Tau_j, M_{j}^{in})$ is the variational posterior estimator with parameters $\xi$, and $\mathcal{CE}$ is the cross entropy operator. We share $\xi$ among agents to accelerate learning.

Next we discuss how to minimize the term $H_{\bm{\theta}_c}(M_{ij})$. Directly minimizing this can cause the variances of the Gaussian distributions to collapse to 0. To deal with this numeric issue, we use the unit covariance matrix and try to minimize $H(M_{ij}) - H(M_{ij} | \Tau_i)$ instead. This is equivalent to minimizing  $H(M_{ij})$ because $H(M_{ij} | \Tau_i)$ is the entropy of a multivariate Gaussian random variable and thus is a constant $\log(\det(2\pi e \bm\Sigma))/2$, where $\bm\Sigma$ is a unit matrix in our formulation). Then we have:
\begin{equation}\label{equ:succinct_original}
    H(M_{ij}) - H(M_{ij} | \Tau_i) = \int p(m_{ij} | \tau_i) p(\tau_i) \log\frac{p(m_{ij} | \tau_i)}{p(m_{ij})} dm_{ij} d\tau_i.
\end{equation}
We use a similar technique as for the mutual information term by introducing an distribution $r(m_{ij})$ to get a upper bound of Eq.~\ref{equ:succinct_original}:
\begin{equation}\label{equ:entropy_lower_bound}
\begin{aligned}
    H(M_{ij}) - H(M_{ij} | \Tau_i) & \le \int p(m_{ij} | \tau_i) p(\tau_i) \log\frac{p(m_{ij} | \tau_i)}{r(m_{ij})} dm_{ij} d\tau_i \\
                                & =   \mathbb{E}_{\Tau_{i}\sim D}\left[ \KL(p(M_{ij} | \Tau_i) \Vert r(M_{ij})) \right].
\end{aligned}
\end{equation}
This bound holds for any distribution $r(M_{ij})$. To facilitate cutting off messages, we use unit Gaussian distribution $\mathcal{N}(0,\mI)$. Combining Eq.~\ref{equ:mi_lower_bound} and~\ref{equ:entropy_lower_bound}, we get a tractable variational lower bound of our objective in Eq.~\ref{equ:original_objective}:
\begin{equation}\label{equ:objective}
\begin{aligned}
    & J_c(\bm{\theta}_c) 
    \ge & \ \mathbb{E}_{\bm{\Tau} \sim \mathcal{D}, M_{j}^{in} \sim f_m(\bm{\Tau}, j;\bm{\theta}_c)} \left[ -\mathcal{CE}\left[ p(A_j | \bm{\Tau}) \Vert q_{\xi} (A_j | \Tau_j, M_{j}^{in}) \right] -\beta \KL(p(M_{ij} | \Tau_i) \Vert r(M_{ij})) \right].
\end{aligned}
\end{equation}
We optimize this bound to generate an expressive and succinct message embedding. Specifically, we minimize:
\begin{equation}\label{equ:message_loss}
\begin{aligned}
    \mathcal{L}_c(\bm{\theta}_c) = 
    \mathbb{E}_{\bm{\Tau} \sim \mathcal{D}, M_{j}^{in} \sim f_m(\bm{\Tau}, j;\bm{\theta}_c)} \left[ \mathcal{CE}\left[ p(A_j | \bm{\Tau}) \Vert q_{\xi} (A_j | \Tau_j, M_{j}^{in}) \right] +\beta \KL(p(M_{ij} | \Tau_i) \Vert r(M_{ij})) \right].
\end{aligned}
\end{equation}

Intuitively, the first term, which we call the \emph{expressiveness loss}, ensures that communication aims to reduce the uncertainty in action-value functions of other agents. The second term, called the \emph{succinctness loss}, forces messages to get close to the unit Gaussian distribution. Since we set the covariances of the latent message variables to the unit matrix, this term actually pushes the means of the message distributions to the origin of the latent space. Using these two losses leads to an embedding space where useless messages distribute near the origin, while messages that contain important information for the decision-making processes of other agents occupy other spaces.

Note that the loss shown in Eq.~\ref{equ:message_loss} is used to update the parameters in the message encoder. In the meantime, all components (the individual action-value functions, the message encoder, and the mixing network) are trained in an end-to-end manner. Thus, the message encoder is updated by two gradients: the gradient induced by $\mathcal{L}_c(\bm{\theta}_c)$ and the gradient associated with the TD loss $\mathcal{L}_{TD}(\bm\theta)$.

\subsection{Cutting Off Messages}
Our objective pushes messages which can not reduce the uncertainties in action-value functions of other agents close to the origin of the latent message space. This naturally gives us a hint on how to drop meaningless messages -- we can order the message distributions according to their means and drop accordingly. Note that since we use a unit covariance matrix for the latent message distribution, bits in a message are independent. Thus, we can make decisions in a bit-by-bit fashion and send messages with various lengths. In this way, our method learns not only when and who (agent $i$ does not communicate with agent $j$ when all bits of $m_{ij}$ are dropped) to communicate, but also what to communicate (how many bits are sent and their values). More details are discussed in Appendix~\ref{appendix:implementation}.

Our framework adopts the centralized training with decentralized execution paradigm. During centralized training, we assume the learning algorithm has access to all agents' individual observation-action histories and the global state $s$. During execution, agents communicate and act in a decentralized fashion based on the learned message encoder and action-value functions.


\section{Related Works}\label{sec:related_works}
Deep multi-agent reinforcement learning has witnessed vigorous progress in recent years. COMA~\citep{foerster2018counterfactual}, MADDPG~\citep{lowe2017multi}, and PR2~\citep{wen2019probabilistic} explores multi-agent policy gradients and respectively address the problem of credit assignment, learning in mixed environments and recursive reasoning. Another line of research focuses on value-based multi-agent RL, among which value-function factorization is the most popular method. Three representative examples: VDN~\citep{sunehag2018value}, QMIX~\citep{rashid2018qmix}, and QTRAN~\citep{son2019qtran} gradually increase the representation ability of the mixing network. In particular, QMIX~\citep{rashid2018qmix} stands out as a scalable and robust algorithm and achieves state-of-the-art results on StarCraft unit micromanagement benchmark~\citep{samvelyan2019starcraft}.

Communication is a hot topic in multi-agent reinforcement learning. End-to-end learning with differentiable communication channel is a popular approach now.~\citet{sukhbaatar2016learning, hoshen2017vain, jiang2018learning, singh2019learning, das2019tarmac} focus on learning decentralized communication protocol and address the problem of when and who to communicate.~\citet{foerster2016learning, das2017learning, lazaridou2017multi, mordatch2018emergence} study the emergence of natural language in the context of multi-agent learning. IC3Net~\citep{singh2019learning} learns gate to control the agents to only communicate with their teammates in mixed multi-agent environment. \citet{zhang2013coordinating, kim2019learning} study action coordination under limited communication channel and thus are related to our works. The difference lies in that they do not explicitly minimize communication. Social influence~\citep{jaques2019social} and InfoBot~\citep{goyal2019infobot} penalize message that has no influence on policies of other agents.

Work that is most related to this paper is TarMAC~\citep{das2019tarmac}, where attention mechanism is used to differentiate the importance of incoming messages. In comparison, we use variation inference to decide the content of messages and whether a message should be sent under the guidance of global reward signals. We compare our method with TarMAC and a baseline combining TarMAC and QMIX in our experiments. Related works on the task of StarCraft II unit micromanagement are discussed in Appendix~\ref{appendix:more_experiments_sc2}. 
\section{Experimental Results}\label{sec:experiment}
In this section, we show our experiments to answer the following questions: (i) Is the miscoordination problem of full value function factorization methods widespread? (ii) Can our method learn the minimized communication protocol required by a task? (iii) Can the learned message distributions reduce uncertainties in value functions of other agents? (iv) How does our method differ from communication with attention mechanism? (v) How does $\beta$ influence the communication protocol? We will first show three simple examples to clarify our idea from different perspectives and then provide performance analysis on StarCraftII unit micromanagement benchmark. For evaluation, all experiments are carried out with 5 random seeds and results are shown with a $95\%$ confidence interval. Details of the NDQ network architecture are given in Appendix~\ref{appendix:architecture}. Videos of our experiments on StarCraft II are available online\footnote{\url{https://sites.google.com/view/ndq}}.
\begin{figure}
    \centering
    \subfigure[Task sensor]{\includegraphics[align=c,width=0.28\linewidth]{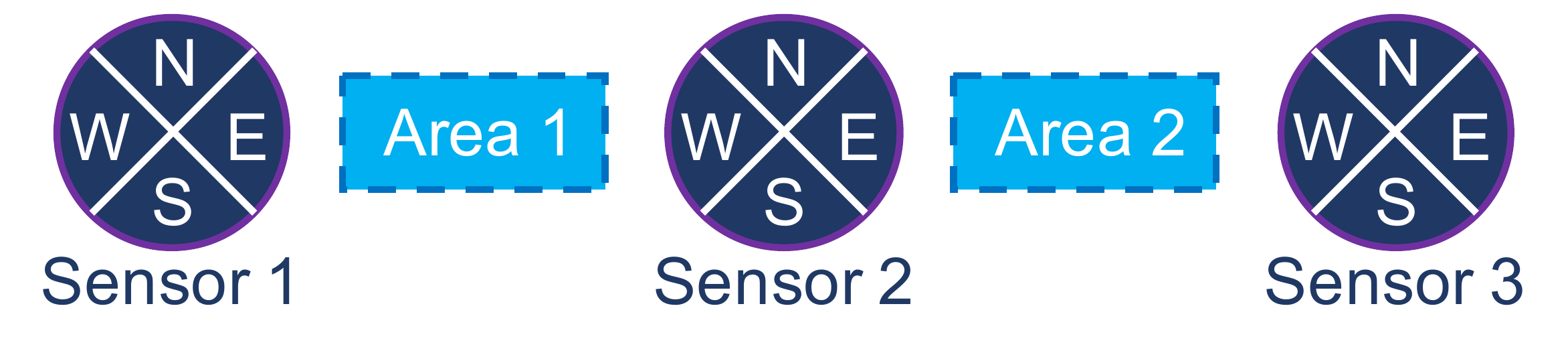}\label{fig:env:sensor}}\hfill    
    \subfigure[Performance comparison]{\includegraphics[align=c,width=0.31\linewidth]{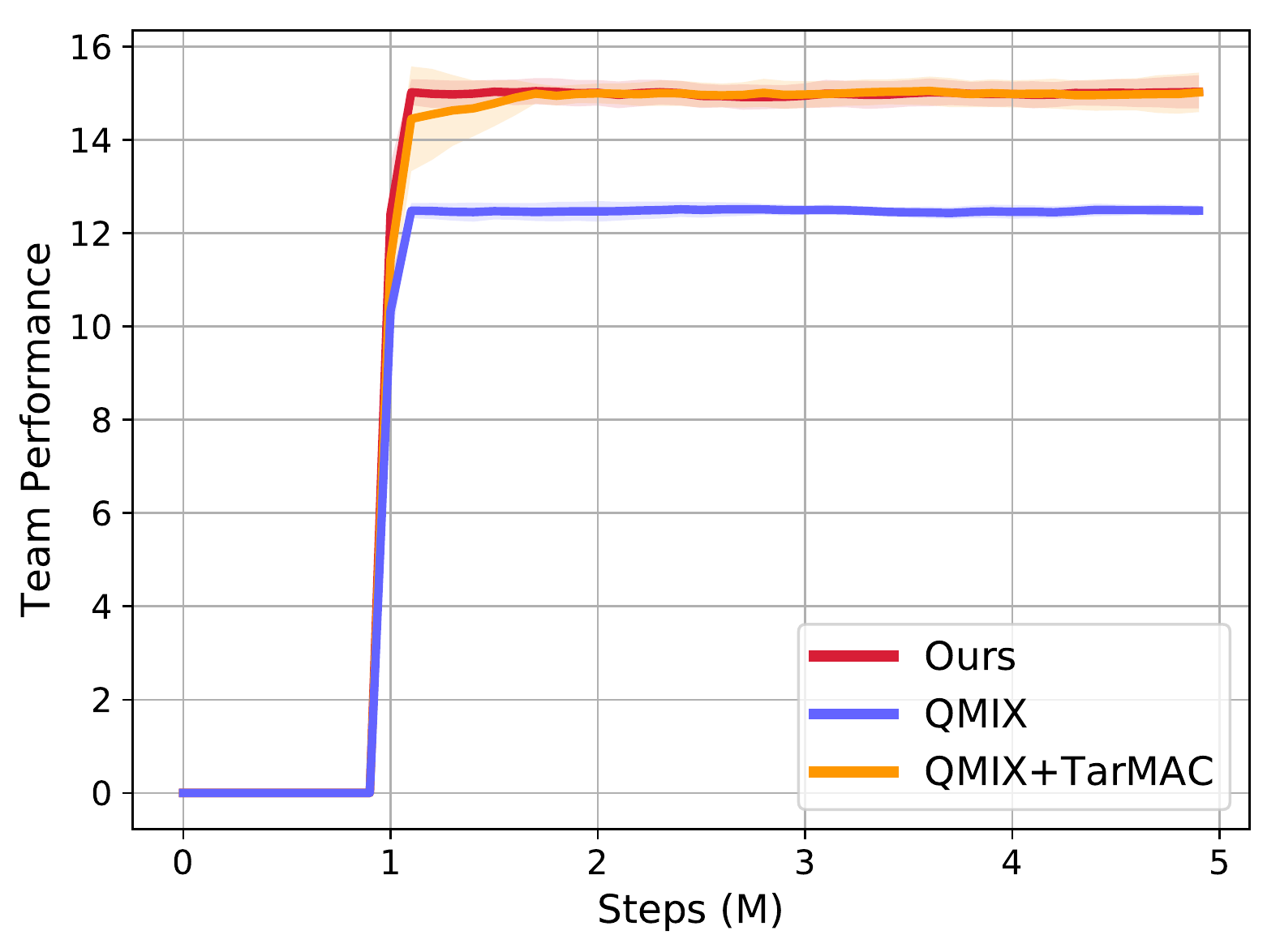}\label{fig:sensor_lc}}\hfill
    \subfigure[Performance vs message drop rate]{\includegraphics[align=c,width=0.31\linewidth]{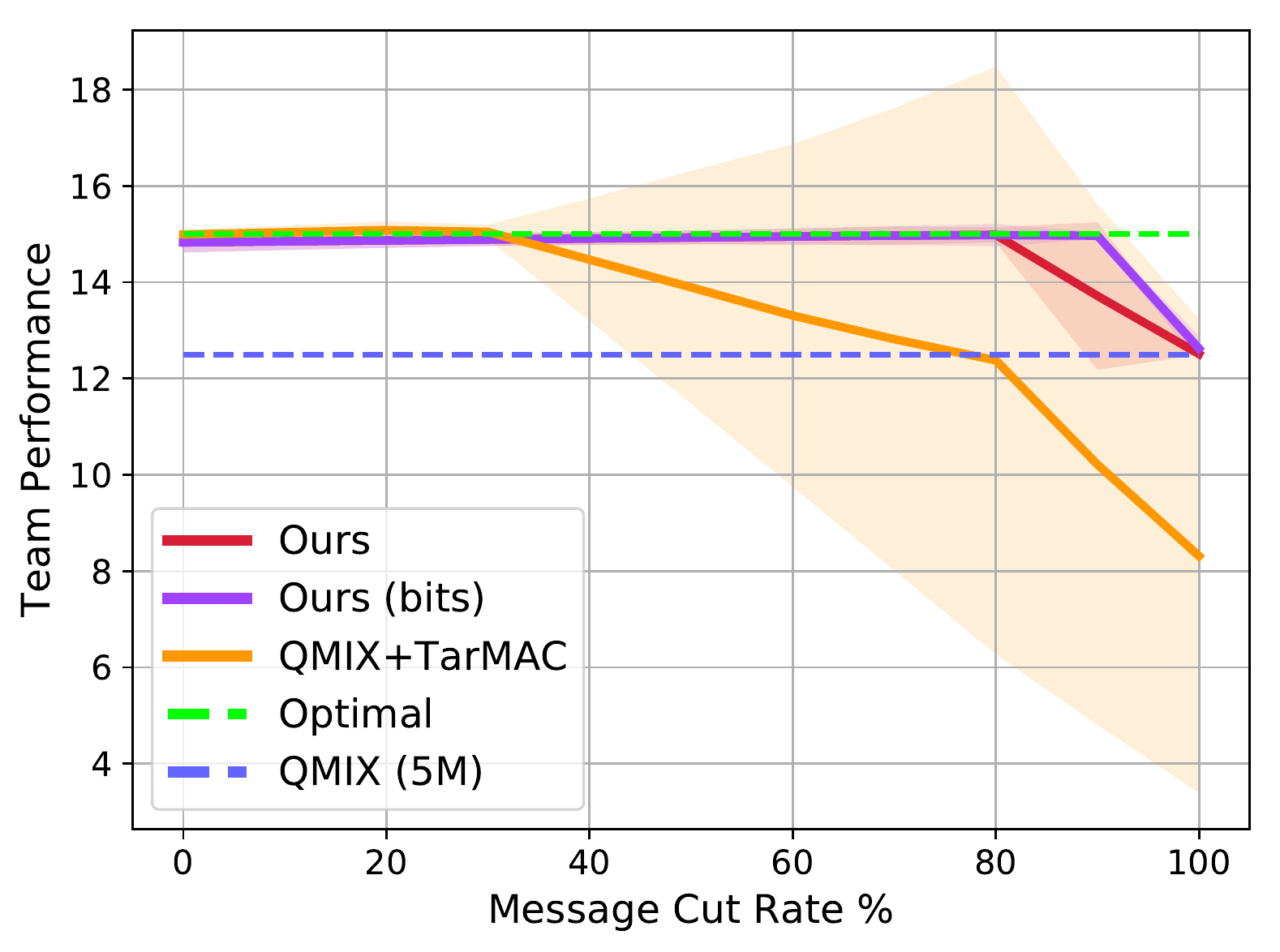}\label{fig:sensor_cut_lc}}
    \caption{(a) Task \textbf{sensor}; (b) Performance comparison on \emph{sensor}; (c) Performance comparison when different percentages of messages are dropped. We measure the drop rate of our method in two ways: count by the number of messages (\name) or count by the number of bits (\name~(bits)). QMIX (5M) is the performance of QMIX after training for 5 million time steps.}
    \label{fig:tarcker_lc}
\end{figure}
\begin{figure}
    \centering
    \subfigure[Target 2 present, $\beta$=$1$]{\includegraphics[width=0.28\linewidth]{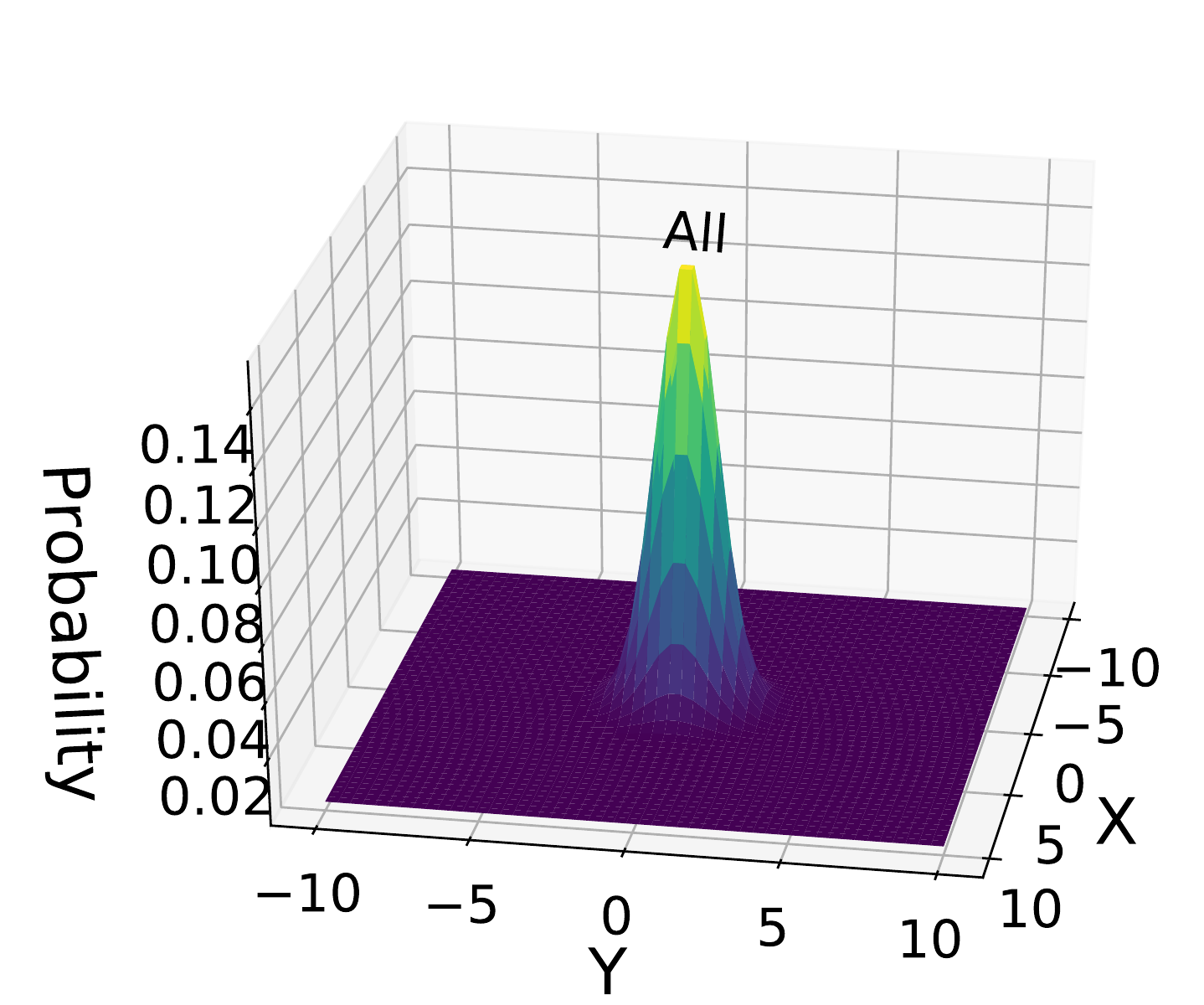}\label{fig:sensor_d_11}}\hfill
    \subfigure[Target 2 present, $\beta$=$10^{\shortn 3}$]{\includegraphics[width=0.28\linewidth]{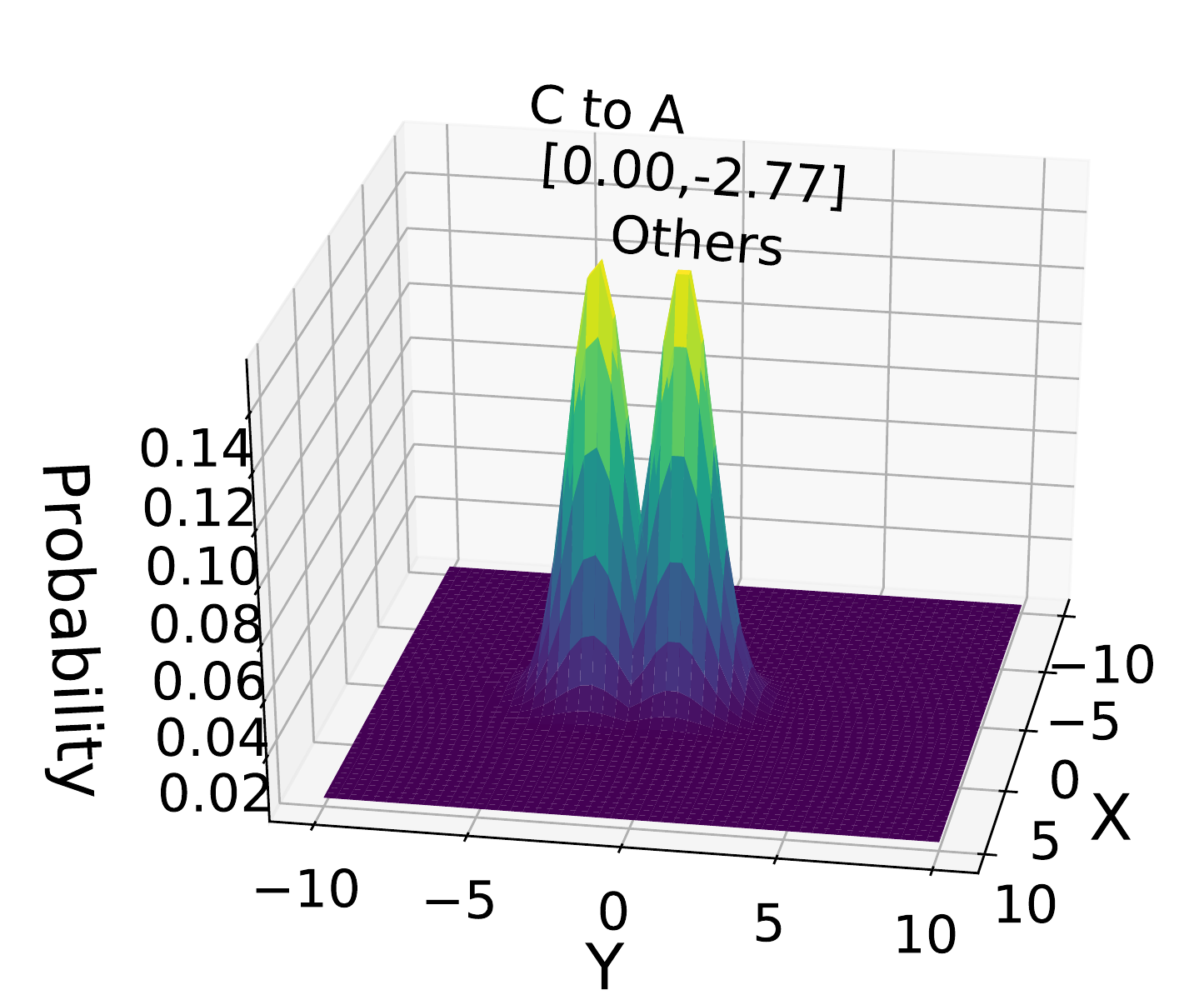}\label{fig:sensor_d_12}}\hfill
    \subfigure[Target 2 present, $\beta$=$10^{\shortn 5}$]{\includegraphics[width=0.28\linewidth]{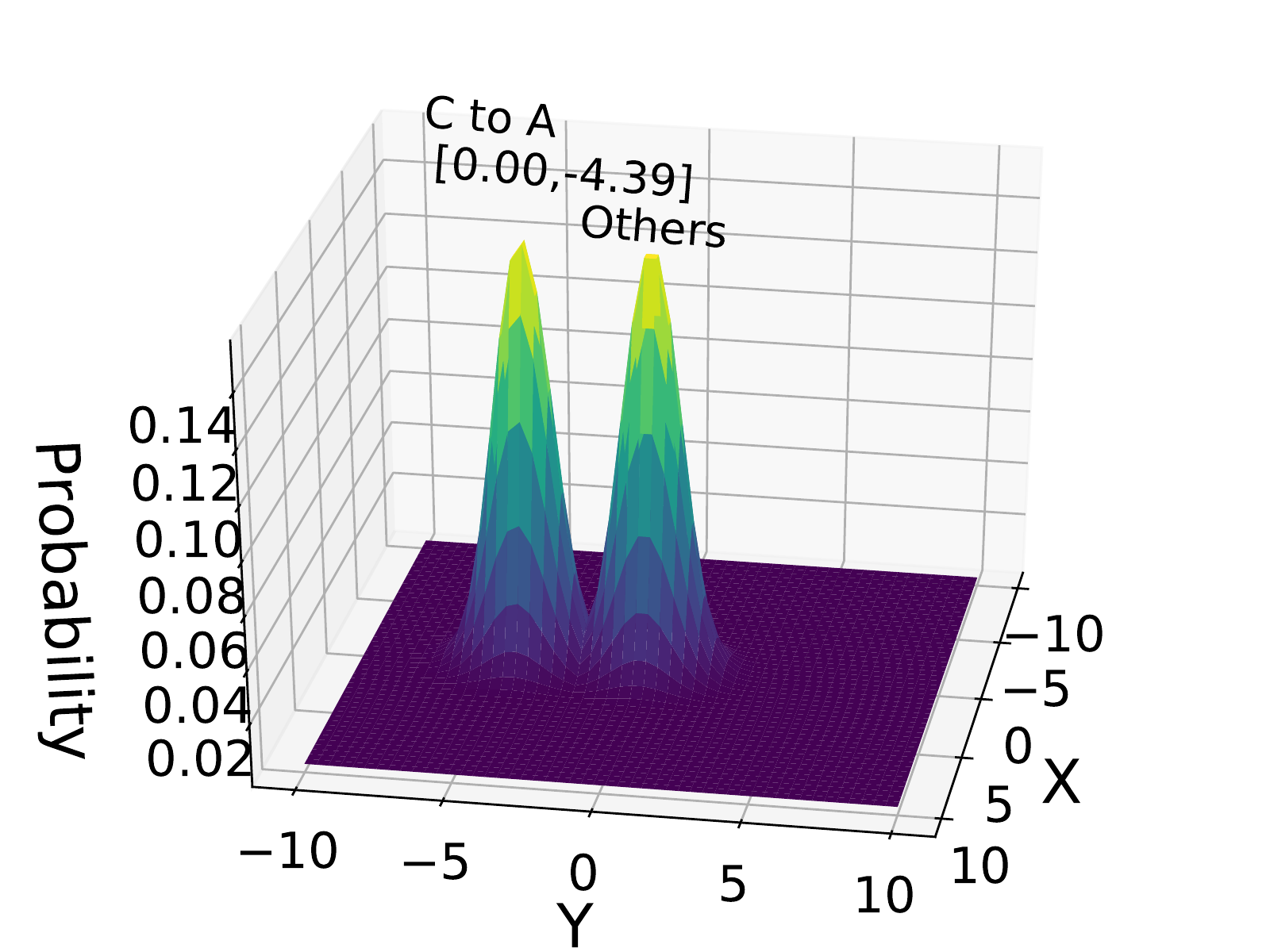}\label{fig:sensor_d_13}}\\
    \subfigure[Target 2 absent, $\beta$=$1$]{\includegraphics[width=0.28\linewidth]{fig-tracker_distribution/target0/1e-2_.pdf}\label{fig:sensor_d_21}}\hfill
    \subfigure[Target 2 absent, $\beta$=$10^{\shortn 3}$]{\includegraphics[width=0.28\linewidth]{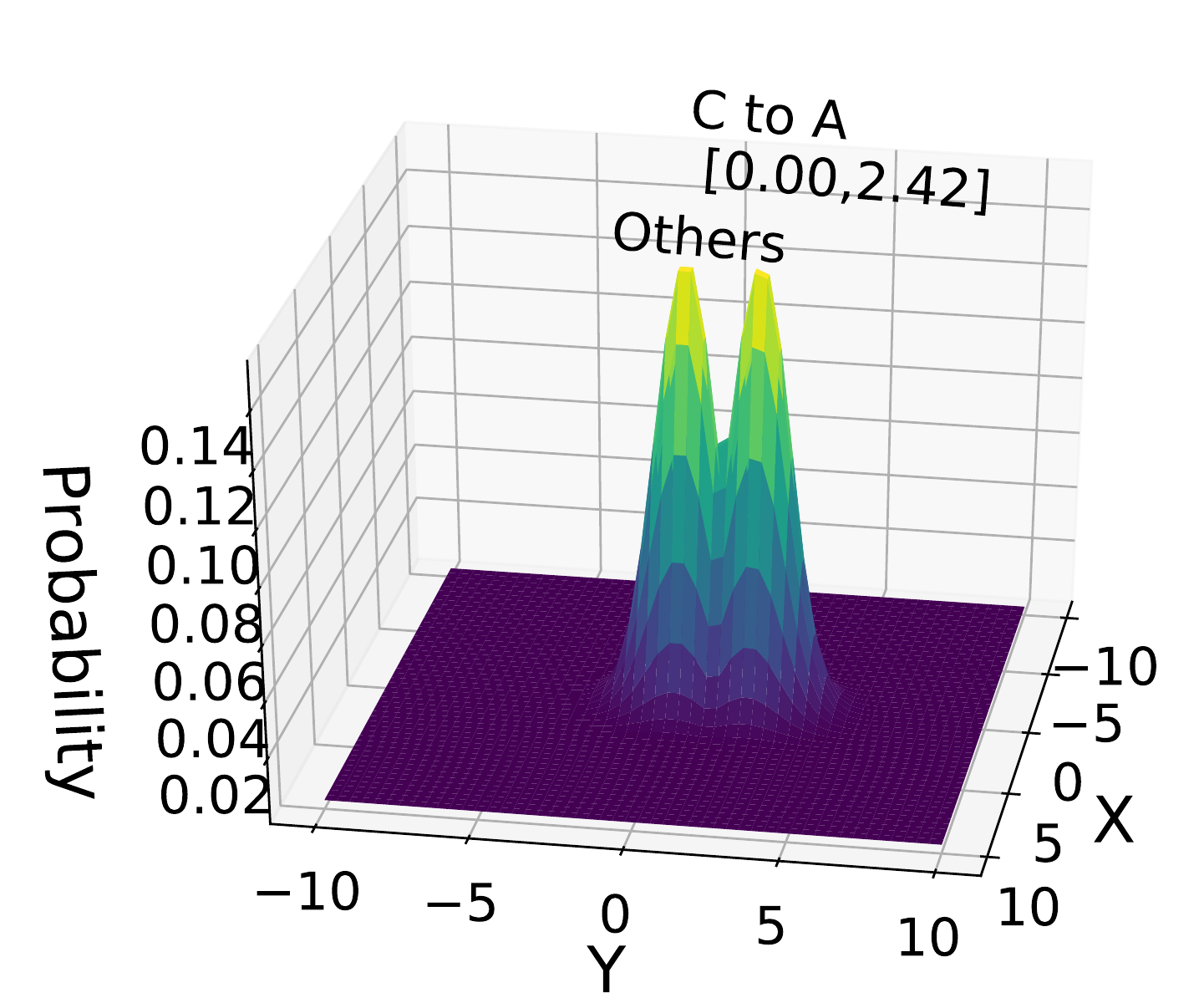}\label{fig:sensor_d_22}}\hfill
    \subfigure[Target 2 absent, $\beta$=$10^{\shortn 5}$]{\includegraphics[width=0.28\linewidth]{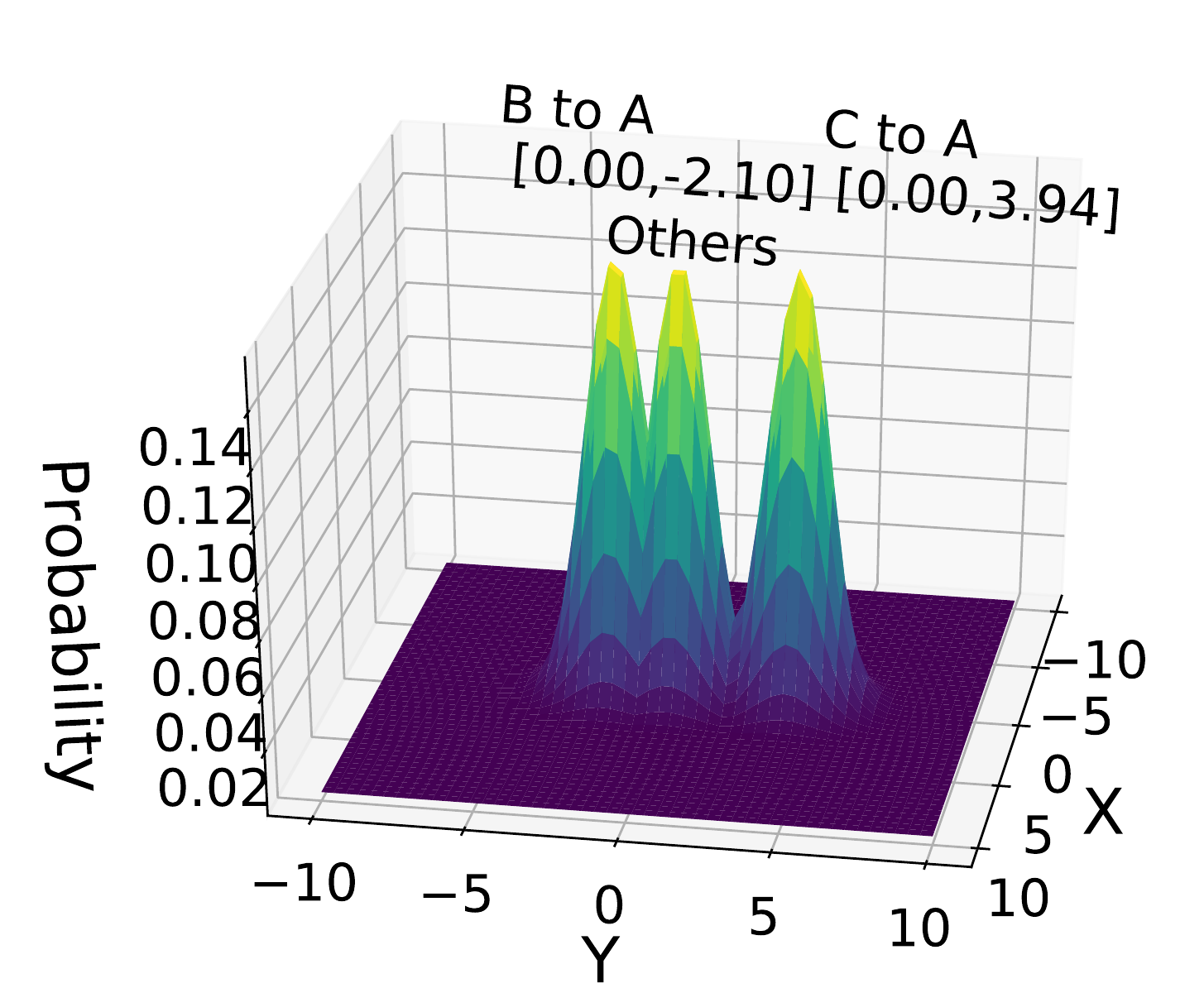}\label{fig:sensor_d_23}}
    \caption{Message distributions learned by our method on \emph{sensor} under different values of $\beta$. (Messages are cut by bit, if $\mu<2.0$). A mean of 0  means that the corresponding bit is below the cutting threshold and is not sent. When $\beta=10^{-3}$, \name~learns the minimized communication strategy that is effective.}
    \label{fig:tarcker_message_distribution}
\end{figure}

We compare \name~with the following baselines: (i) QMIX~\citep{rashid2018qmix}; (ii) TarMAC~\citep{das2019tarmac}. QMIX and TarMAC are state-of-the-art full value function factorization and attentional communication methods, respectively. (iii) QMIX+TarMAC. We introduce the attentional communication mechanism into the value function factorization paradigm by integrating the communication component of TarMAC into QMIX.

\subsection{Didactic Examples}\label{sec:didactic}
We first demonstrate our idea on three didactic examples:~\textbf{sensor},~\textbf{hallway}, and~\textbf{independent search}. 
\begin{figure}
    \centering
    \subfigure[Task hallway]{\includegraphics[align=c,width=0.3\linewidth]{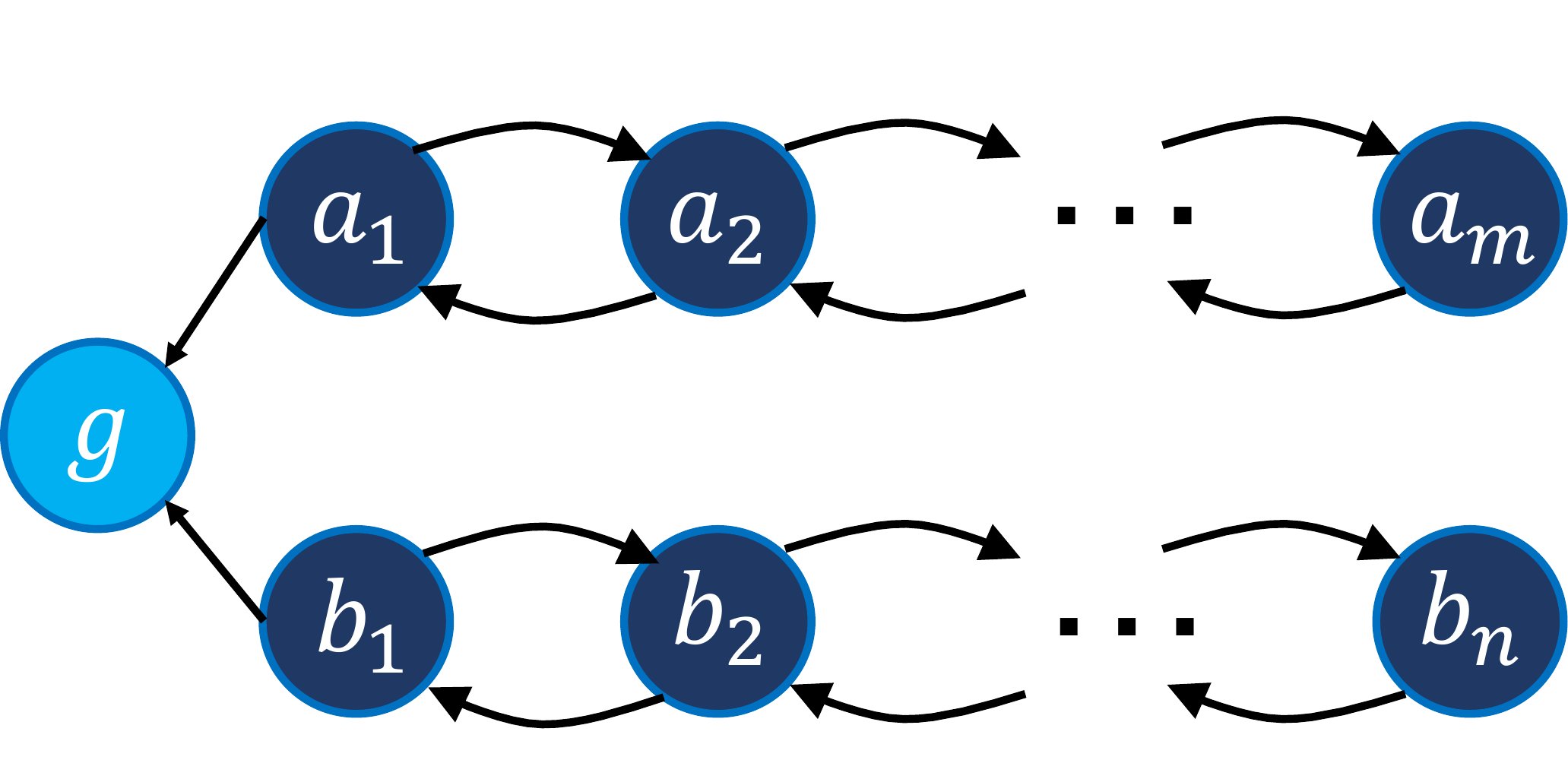}\label{fig:env:hallway}}\hfill
    \subfigure[Performance comparison]{\includegraphics[align=c,width=0.34\linewidth]{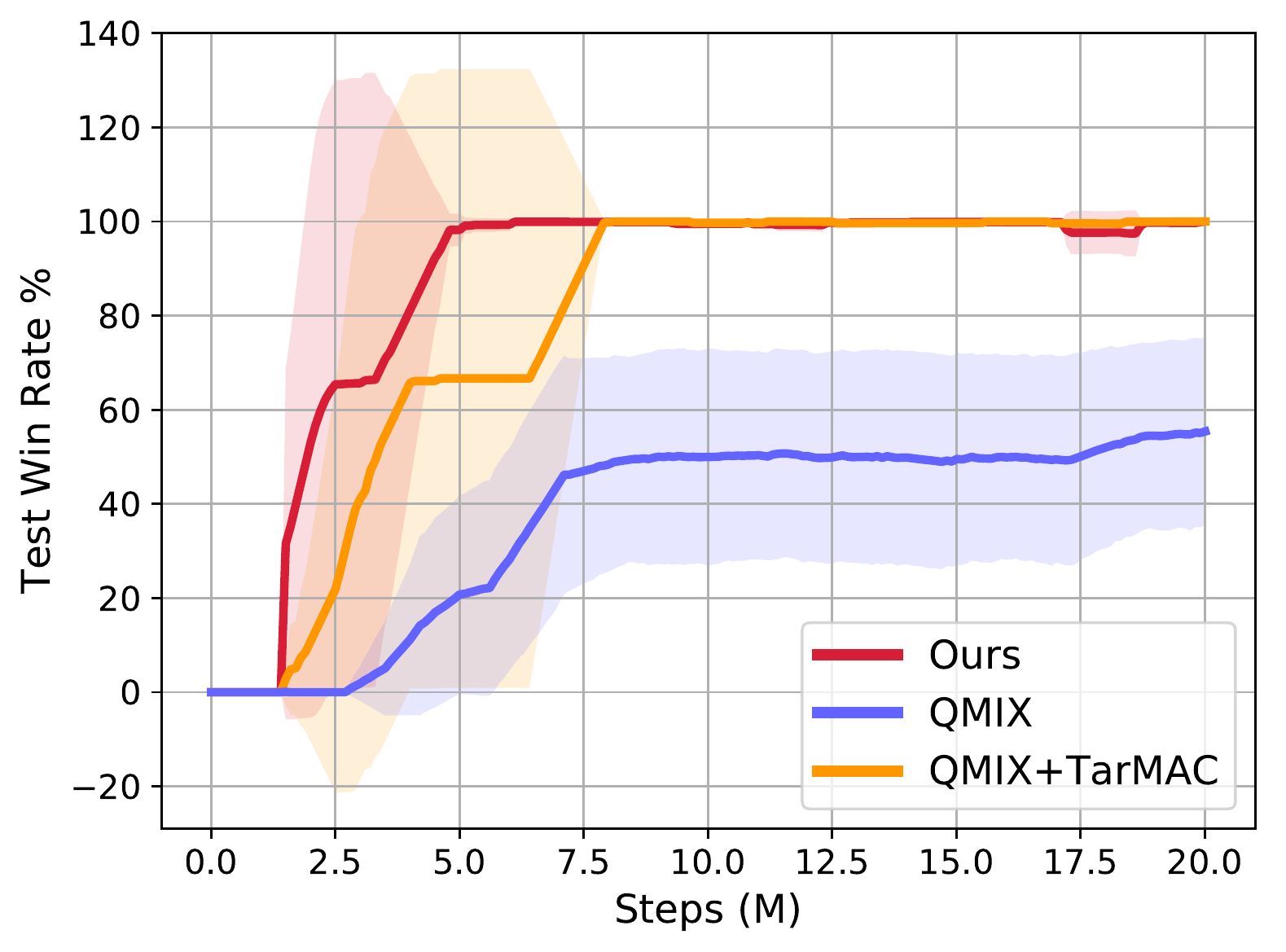}\label{fig:hallway_lc}}\hfill
    \subfigure[Performance vs message drop rate]{\includegraphics[align=c,width=0.34\linewidth]{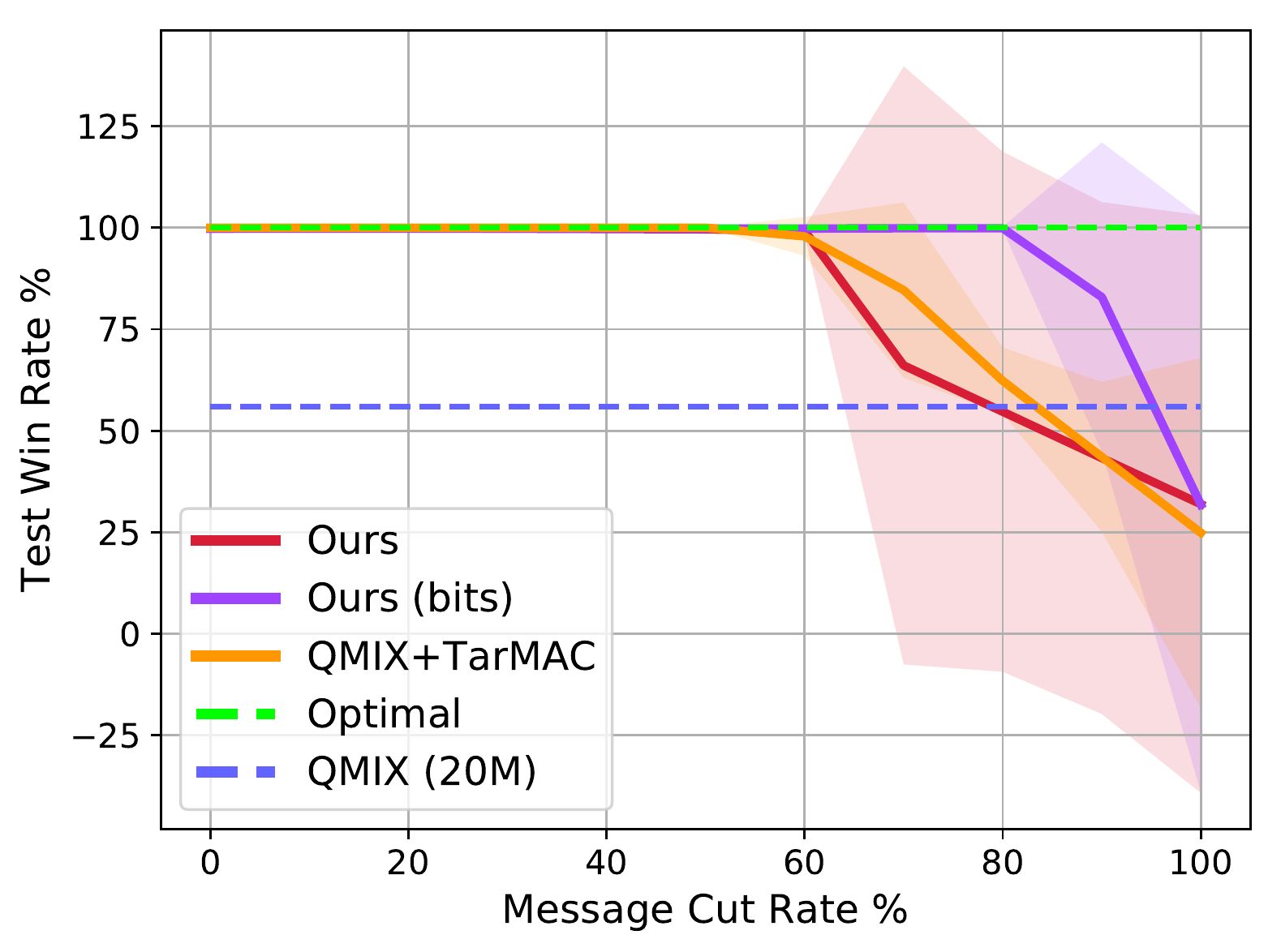}\label{fig:hallway_cut_lc}}
    \caption{Results on \textbf{hallway}. (a, b) Task \emph{hallway} and performance comparison. (c) Similar to Fig.~\ref{fig:sensor_cut_lc}, we show performance comparison when different percentages of messages are dropped.}
    \label{fig:hallway}
\end{figure}
\begin{figure}
    \centering
    \includegraphics[width=0.24\linewidth]{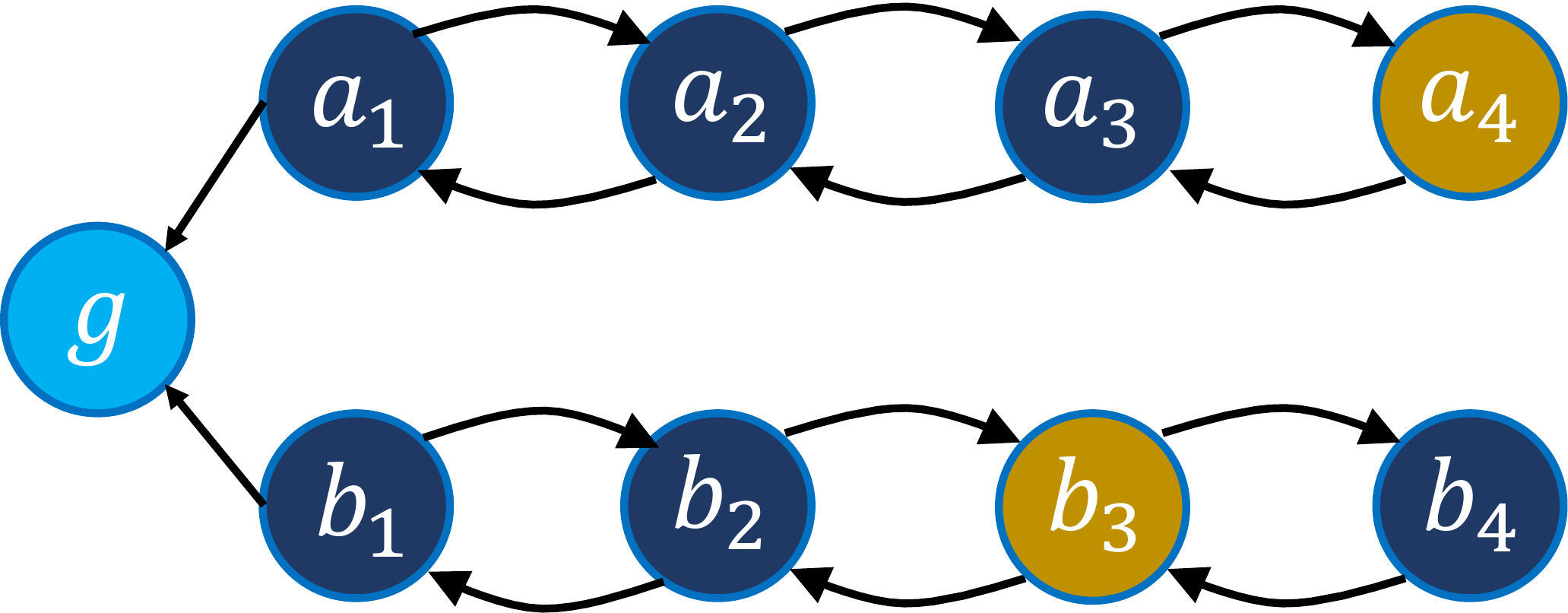}
    \includegraphics[width=0.24\linewidth]{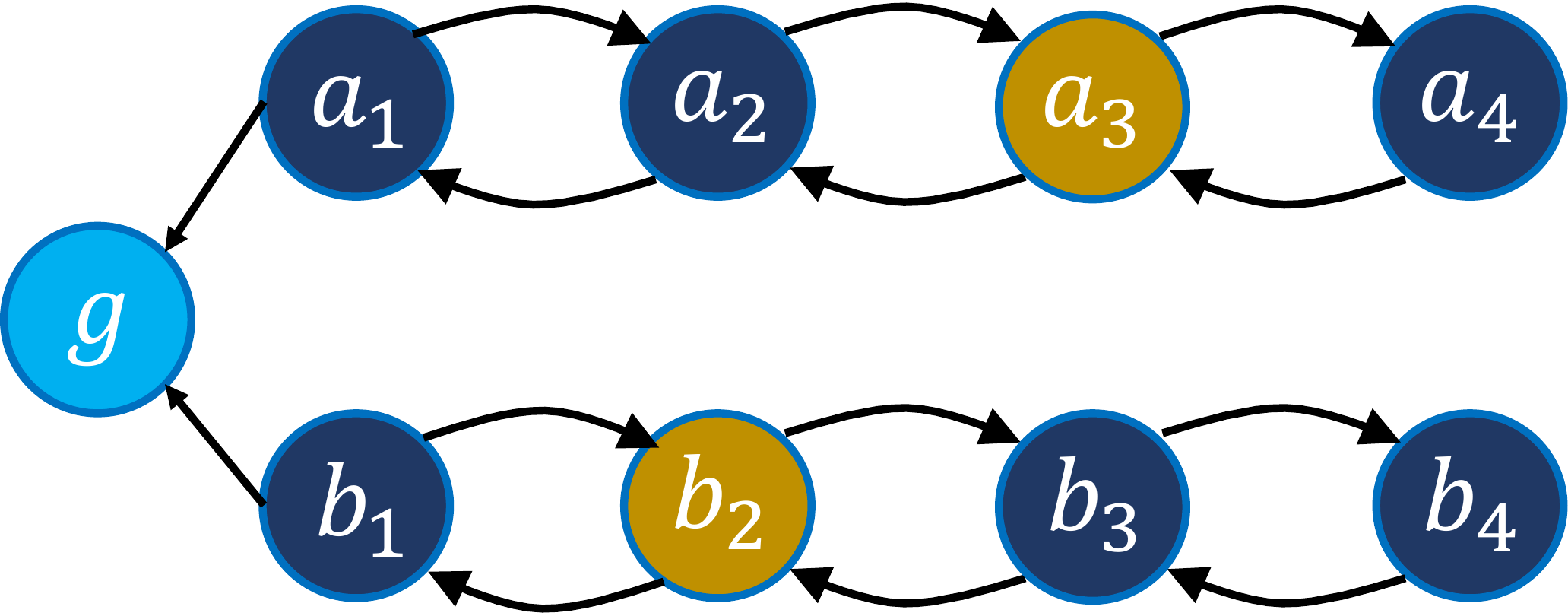}
    \includegraphics[width=0.24\linewidth]{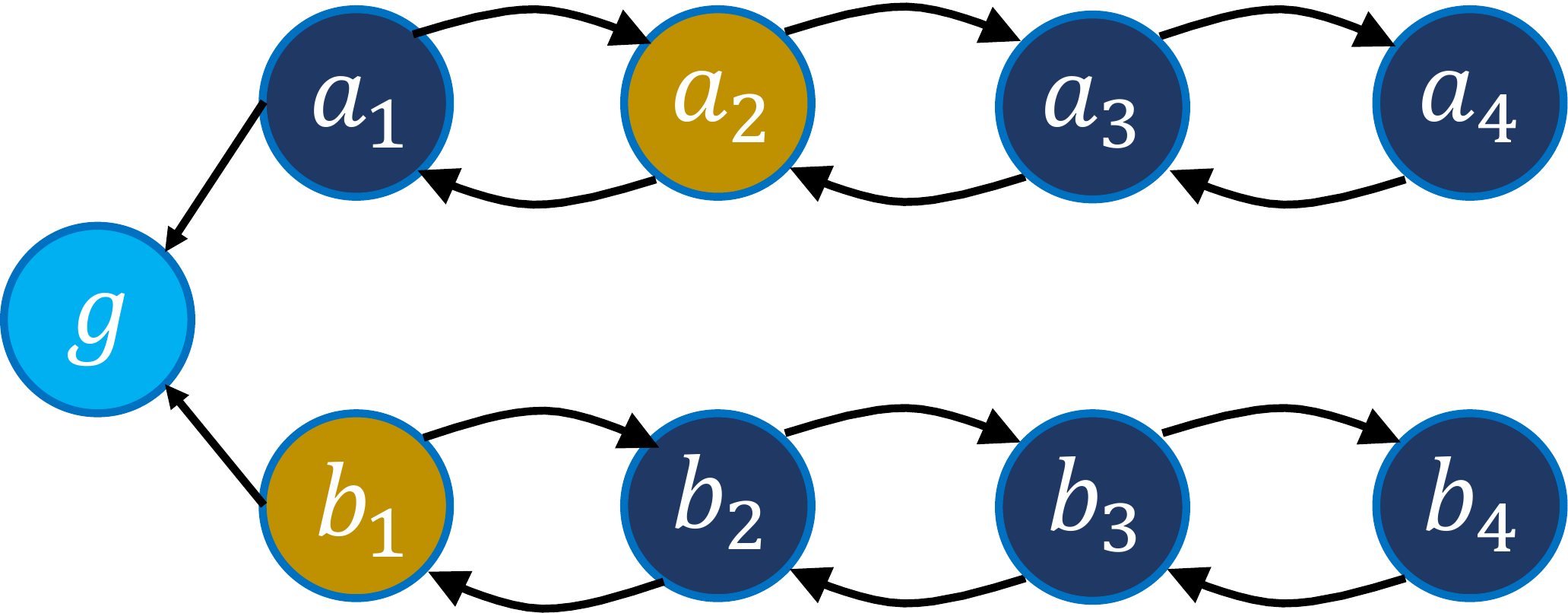}
    \includegraphics[width=0.24\linewidth]{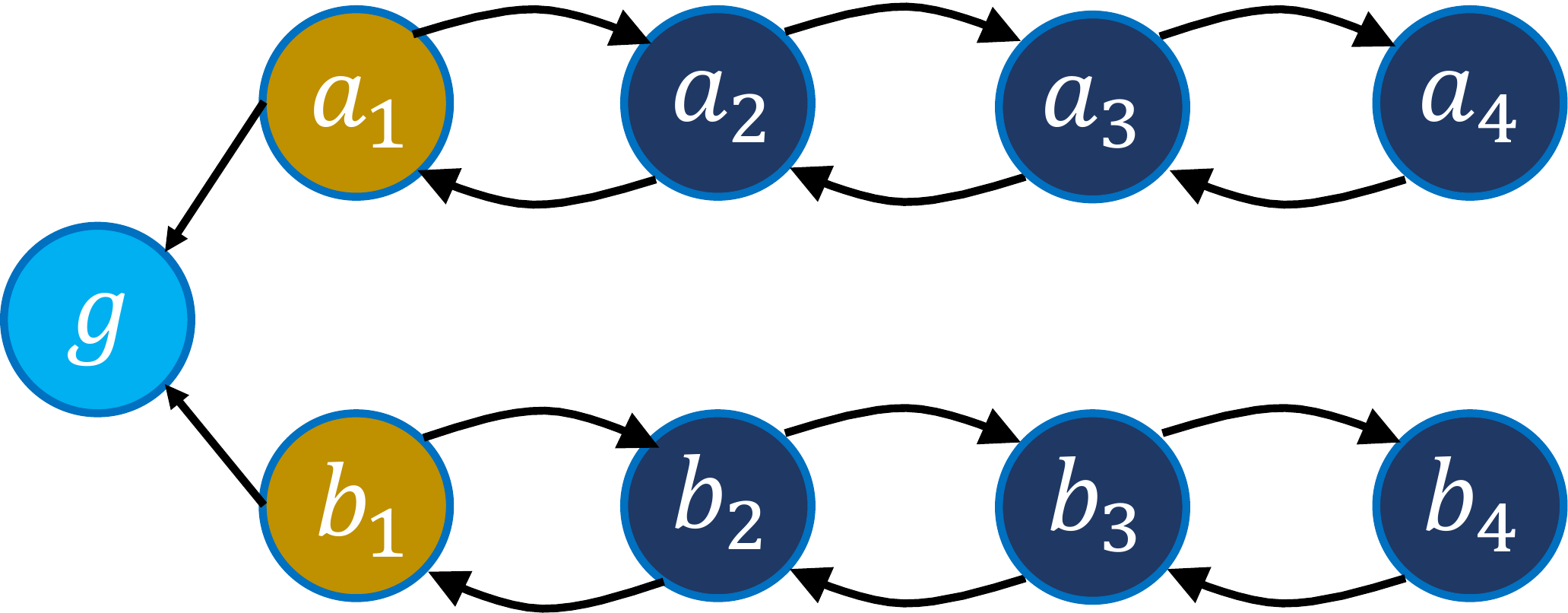}\\
    \subfigure[t=1]{\includegraphics[width=0.24\linewidth]{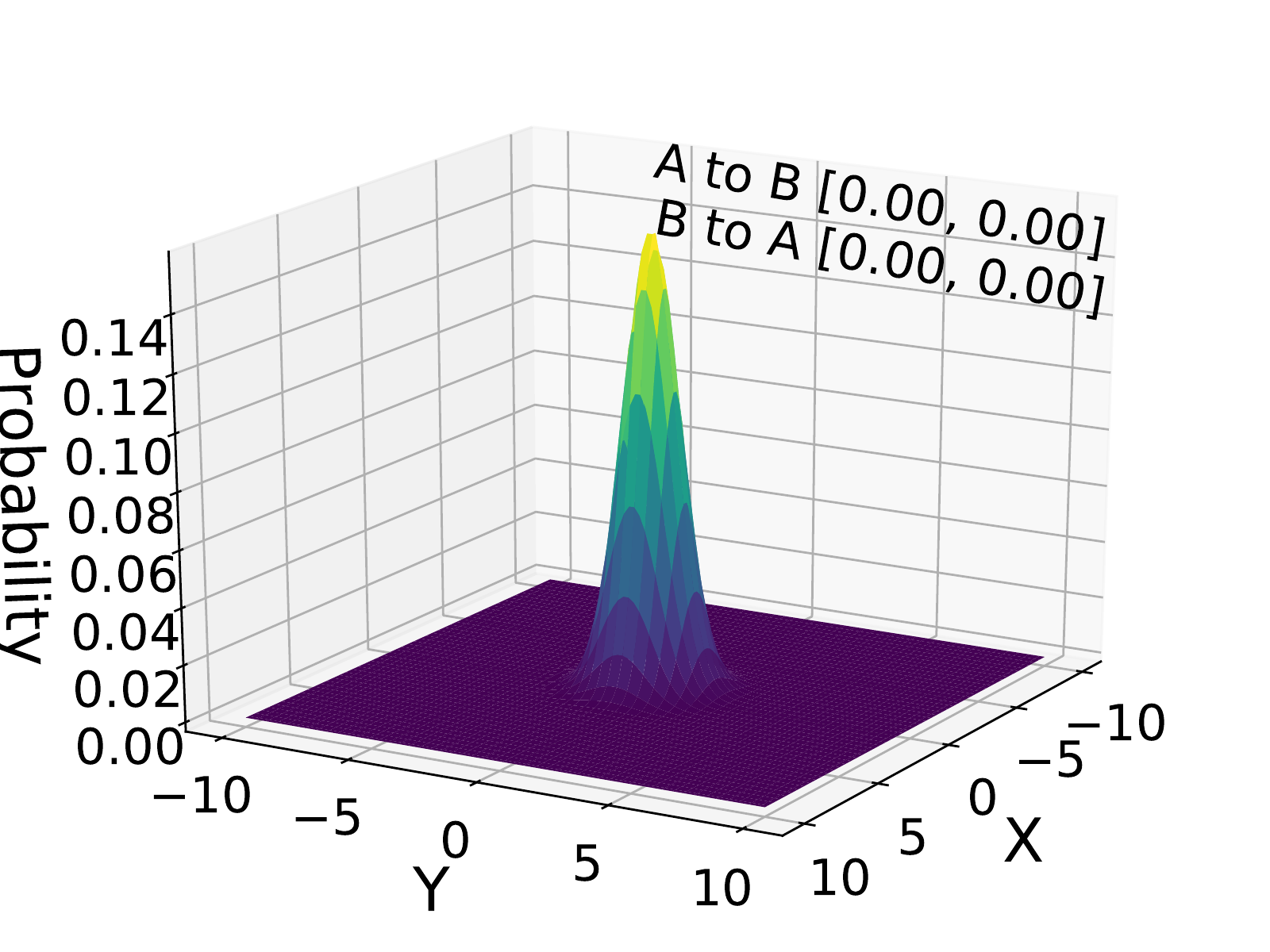}}
    \subfigure[t=2]{\includegraphics[width=0.24\linewidth]{fig-hallway_distribution/1.pdf}}
    \subfigure[t=3]{\includegraphics[width=0.24\linewidth]{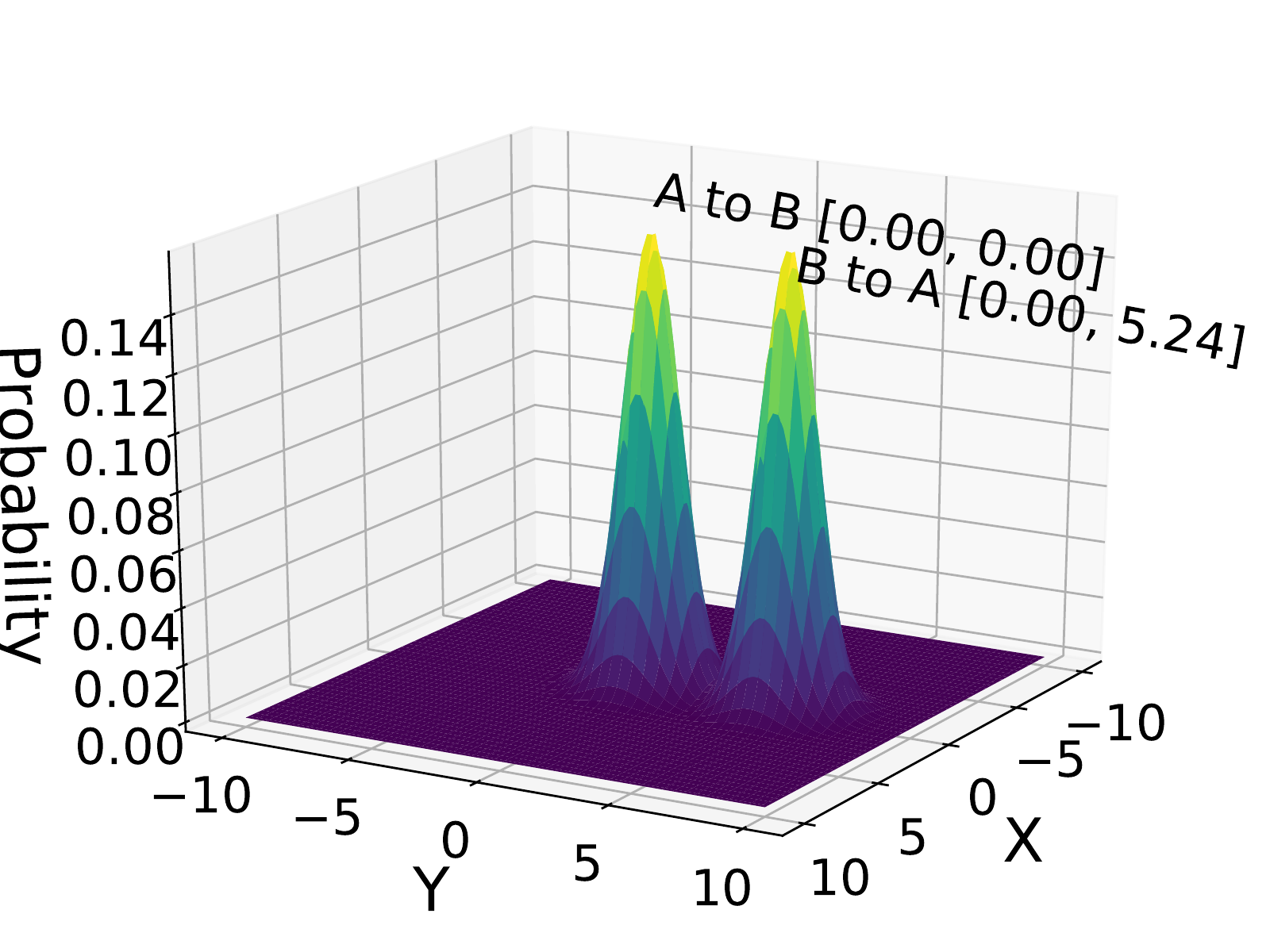}}
    \subfigure[t=4]{\includegraphics[width=0.24\linewidth]{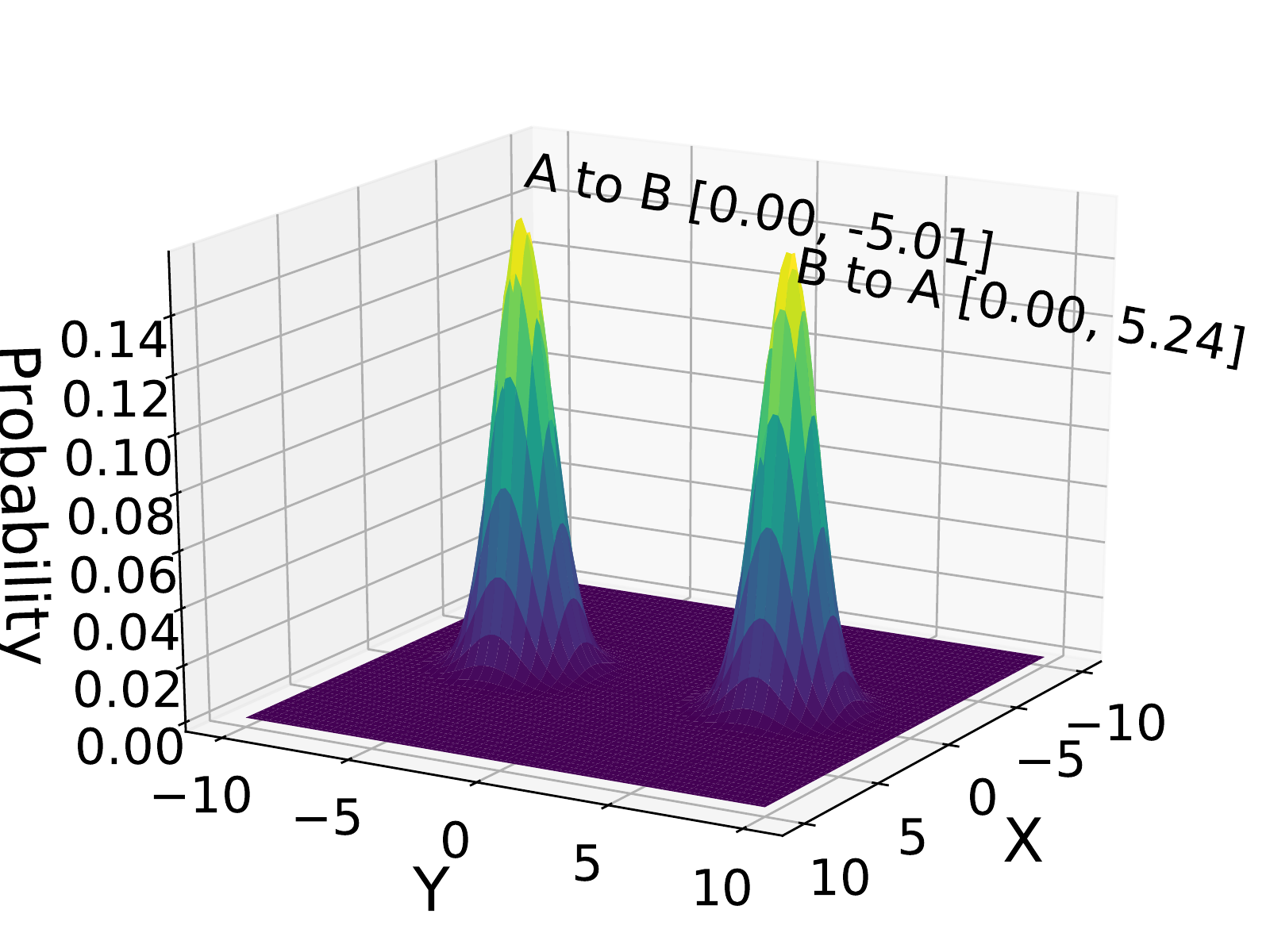}}
    \caption{Message embedding representations learned by our method on \emph{hallway}. A mean of 0 means that the corresponding bit is below the cutting threshold ($\mu\shorte 3$) and is not sent.}
    \label{fig:hallway_message_distribution}
\end{figure}

Sensor network is a frequently used testbed in multi-agent learning field~\citep{kumar2011scalable, zhang2011coordinated}. We use a 3-chain sensor configuration in the task \textbf{sensor} (Fig.~\ref{fig:env:sensor}). Each sensor is controlled by one agent, and they are rewarded for successfully locating targets, which requires two sensors to scan the same area simultaneously when the target appears. At each timestep, target 1 appears in area 1 with possibility 1, and locating it induces a team reward of 20; target 2 appears with probability 0.5 in area 2, and agents are rewarded 30 for locating it. Agents can observe whether a target is present in nearby areas and need to choose one of the five actions: scanning $\mathtt{north}$, $\mathtt{east}$, $\mathtt{south}$, $\mathtt{west}$, and $\mathtt{noop}$. Every scan induces a cost of -5.

In the optimal policy, when target 2 appears, sensor 1 should turn itself off while sensors 2 and 3 are expected to scan area 2 to get the reward. And when target 2 is absent, sensors 1 and 2 need to cooperatively scan area 1 while sensor 3 takes $\mathtt{noop}$. 

\emph{Sensor} is representative of a class of tasks where the uncertainties about the true states cause policies learned by full value function factorization method to be sub-optimal -- sensor 1 has to know whether the target is present in area 2 to make a decision. However, the mixing network of QMIX cannot provide such information. As a result, QMIX converges to a sub-optimal policy, which gets a team reward of 12.5 a step on average (see Fig.~\ref{fig:sensor_lc}). 

We are particularly interested in whether our method can learn the minimized communication strategy. Fig.~\ref{fig:tarcker_message_distribution} shows the latent message space learned by \name. When $\beta=10^{-3}$, agent 3 learns to send a bit to tell agent 1 whether target 2 appears. In the meantime, the latent message distribution between any other pair of agents is close to the standard Gaussian distribution and thus is dropped. This result indicates that \name~has discovered the minimized conditional graph and can explain why our method can still perform optimally when $80\%$ of the messages are cut off (Fig.~\ref{fig:sensor_cut_lc}). When $\beta$ becomes too large (1.0), all the message bits are pushed below the cutting threshold (Fig.~\ref{fig:sensor_d_11} and~\ref{fig:sensor_d_21}). When $\beta$ is too small ($10^{-5}$), \name~pays more attention on reducing uncertainties in Q-functions rather than compressing messages. Correspondingly, both agent 3 and agent 2 send a message to agent 1 (Fig.~\ref{fig:sensor_d_13} and~\ref{fig:sensor_d_23}), which is a redundant communication strategy.


\begin{figure}
    \centering
    \subfigure[3b\_vs\_1h1m]{\includegraphics[width=0.32\linewidth]{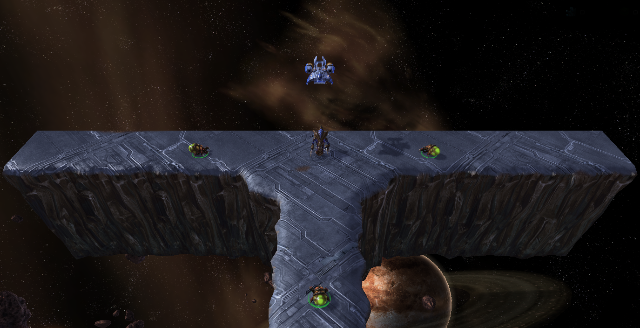}}\hfill
    \subfigure[3s\_vs\_5z]{\includegraphics[width=0.32\linewidth]{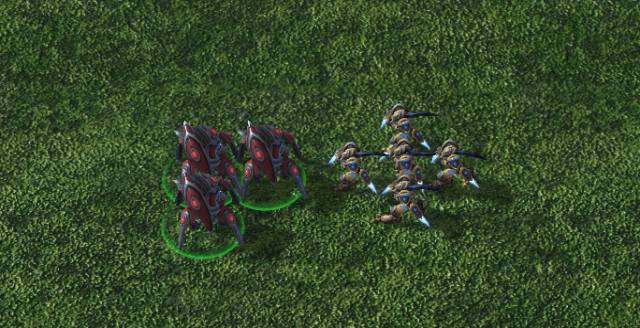}}\hfill
    \subfigure[1o2r\_vs\_4r]{\includegraphics[width=0.32\linewidth]{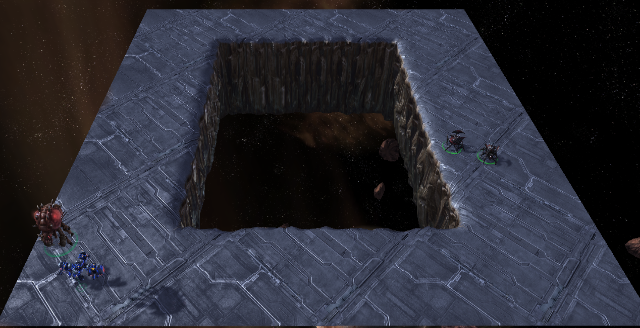}}\\
    \subfigure[5z\_vs\_1ul]{\includegraphics[width=0.32\linewidth]{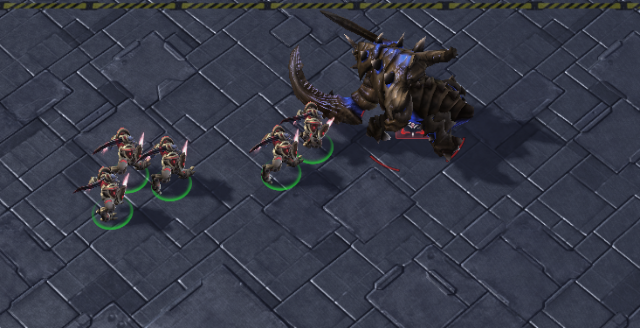}}\hfill
    \subfigure[1o10b\_vs\_1r]{\includegraphics[width=0.32\linewidth]{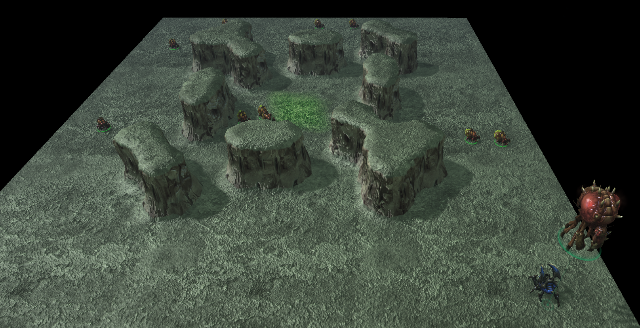}}\hfill
    \subfigure[MMM]{\includegraphics[width=0.32\linewidth]{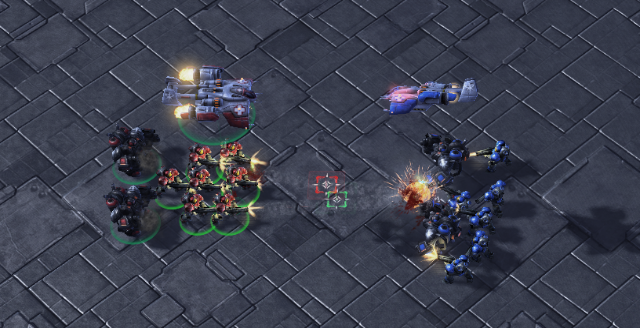}}
    \caption{Snapshots of the StarCraft II scenarios that we consider.}
    \label{fig:sc2_scene}
\end{figure}

The second example, \textbf{hallway} (Fig.~\ref{fig:env:hallway}), is a Dec-POMDP with two agents randomly starting at states $a_1$ to $a_m$ and $b_1$ to $b_n$, respectively. Agents can observe their position and choose to move left, move right, or keep still at each timestep. Agents will win and get a reward of 10 if they arrive at state $g$ simultaneously. Otherwise, if any agent arrives at $g$ earlier than the other, the team will not be rewarded, and the next episode will begin. The horizon is set to $\max(m,n)+10$ to avoid an infinite loop.

\emph{Hallway} aims to show that the miscoordination problem of full factorization methods can be severe in multi-step scenarios. We set $m$ and $n$ to 4 and show comparison of performance in Fig.~\ref{fig:hallway_lc}. The miscoordination problem causes QMIX to lose about half of the games. We are again particularly interested in the message embedding representations learned by \name. We show an episode in Fig.~\ref{fig:hallway_message_distribution}. Two agents begin at $a_4$ and $b_3$, respectively. They first move left silently ($t=1$ and $t=2$) until agent B arrives at $b_1$. On arriving $b_1$, it sends a bit whose value is 5.24 to A. After sending this bit, B stays at $b_1$ and sends this message repeatedly until it receives a bit from A indicating that A has arrived at $a_1$. They then move left together and win. This is the minimized communication strategy. Taking advantage of this strategy, \name~can still win in 100\% of episodes when 80\% of the communicating bits are dropped (Fig.~\ref{fig:hallway_cut_lc}).

The third task, \textbf{independent search}, aims to demonstrate that \name~can learn not to communicate in scenarios where agents are independent. Task description and results analysis are deferred to Appendix~\ref{appendix:more_experiments_didactic}.

\subsection{Maximum Value Function Factorization in StarCraft II}\label{sec:experiment_sc2}
To demonstrate that the miscoordination problem of full decomposition methods is widespread in multi-agent learning, we apply our method and baselines to the StarCraft II micromanagement benchmark introduced by~\citet{samvelyan2019starcraft}, which is described in detail in Appendix~\ref{appendix:more_experiments_sc2}. We further increase the difficulty of action coordination by i) reducing the sight range of agents from 9 to 2; ii) introducing challenging maps with complex terrain or highly random spawning positions for units. We test our method on the six maps shown in Fig.~\ref{fig:sc2_scene}. Detailed descriptions of these scenarios are provided in Appendix~\ref{appendix:more_experiments_sc2}.

We use the same hyper-parameter setting for \name~on all maps: $\beta$ is set to $10^{-5}$, $\lambda$ is set to $0.1$, and the length of message $m_{ij}$ is set to 3. For evaluation, we pause training every $100k$ environment steps and run 48 testing episodes. Other hyper-parameters for NDQ are described in Appendix~\ref{appendix:architecture}.
\begin{figure}
    \centering
    \includegraphics[width=0.32\linewidth]{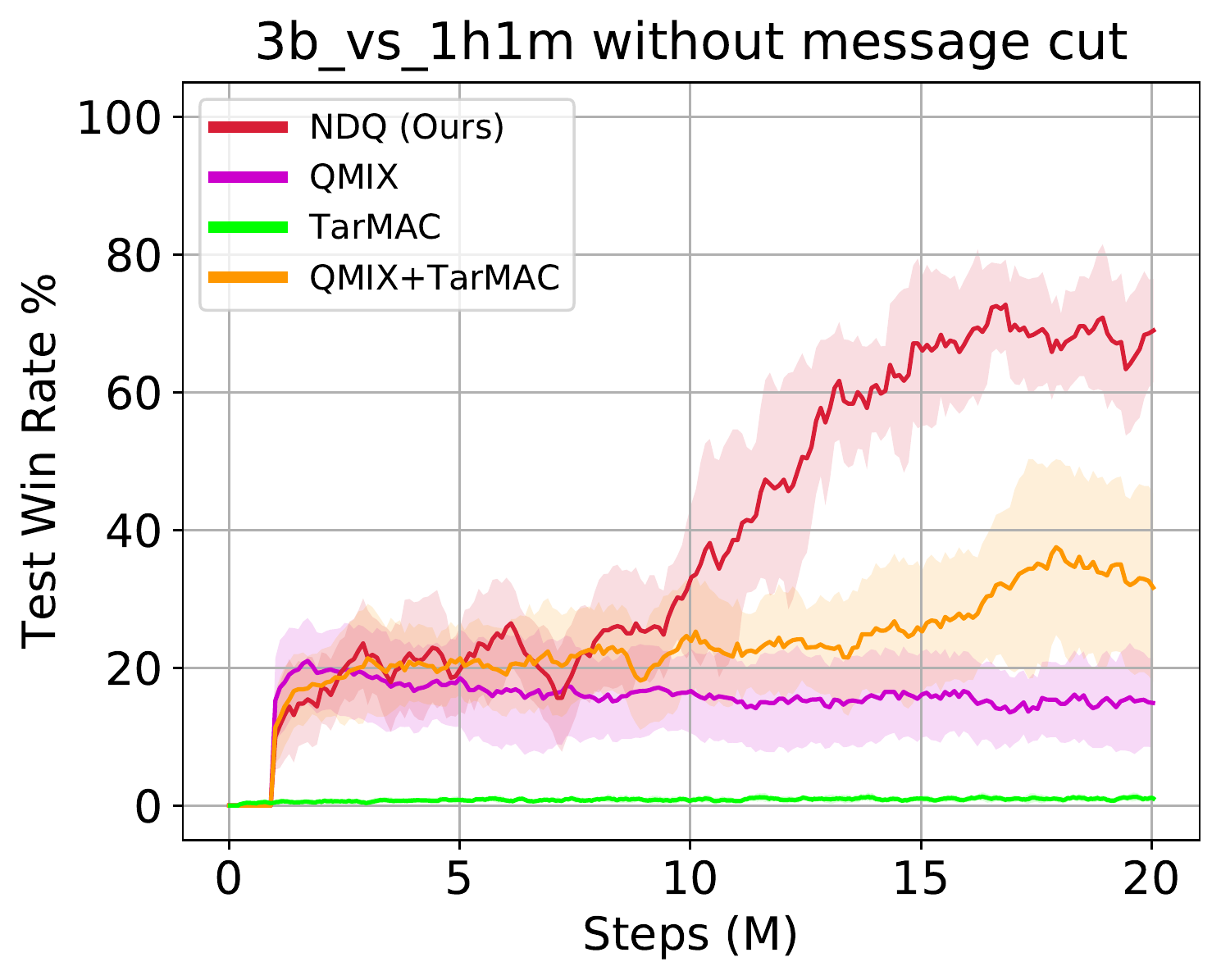}\hfill
    \includegraphics[width=0.32\linewidth]{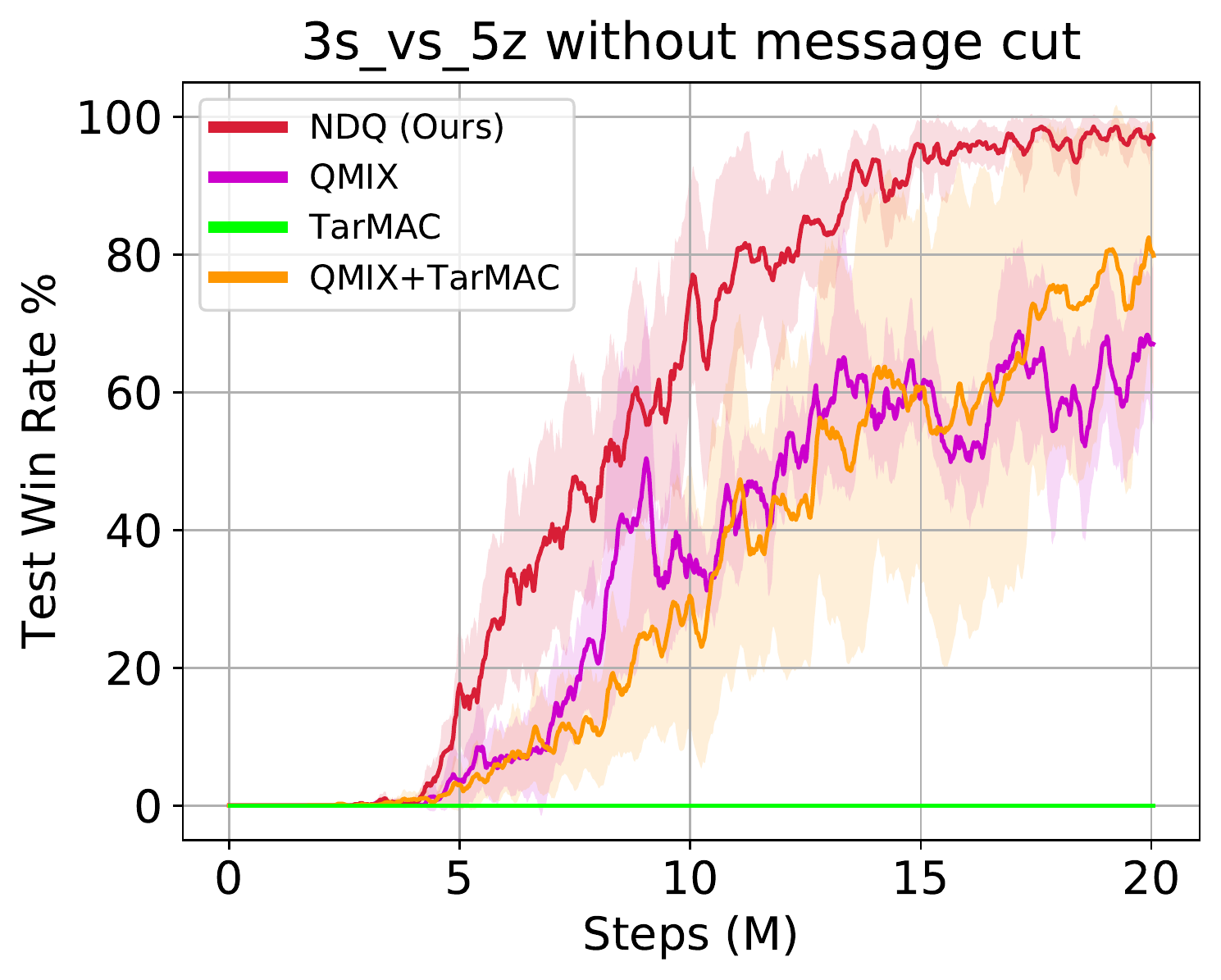}\hfill
    \includegraphics[width=0.32\linewidth]{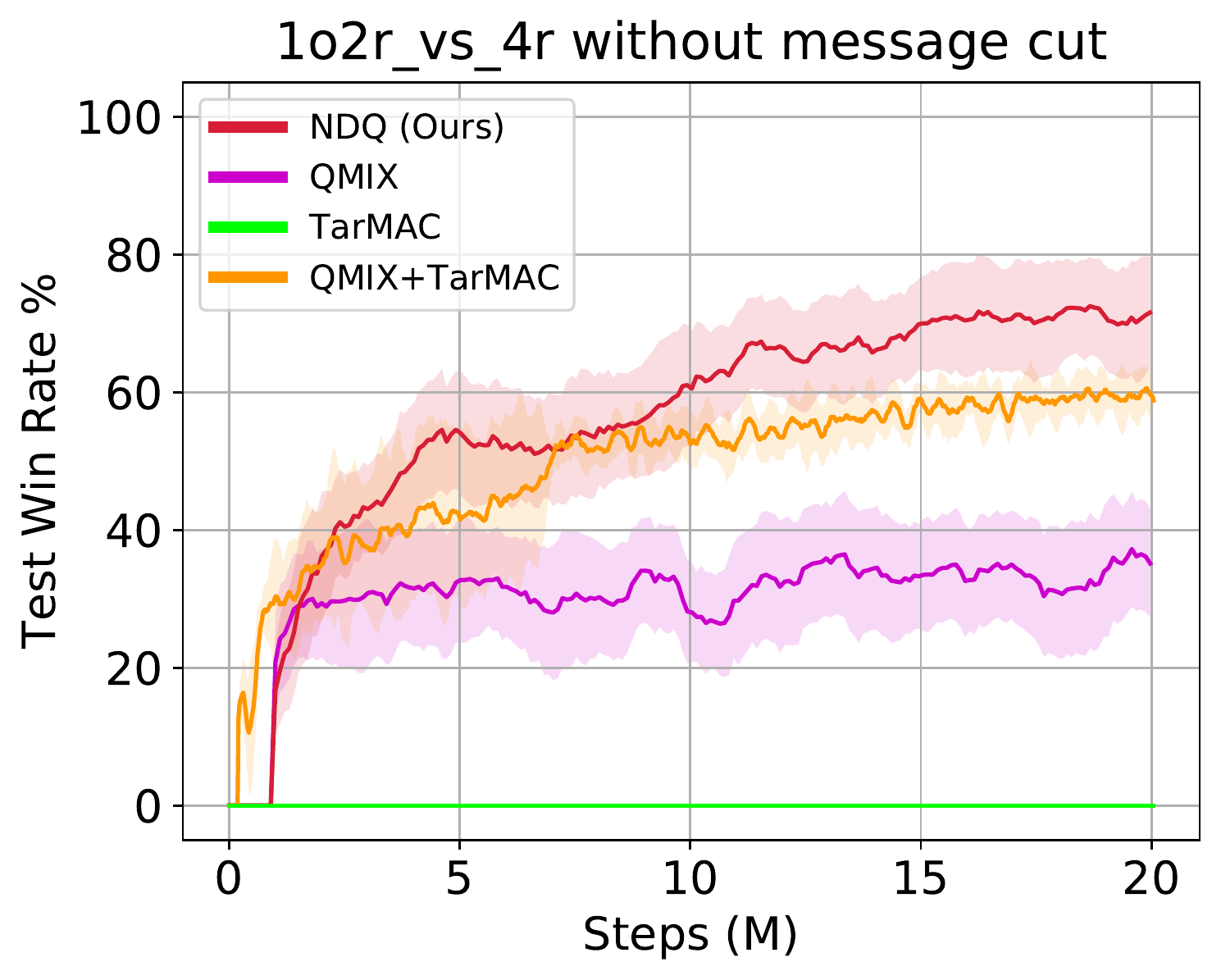}\\
    \includegraphics[width=0.32\linewidth]{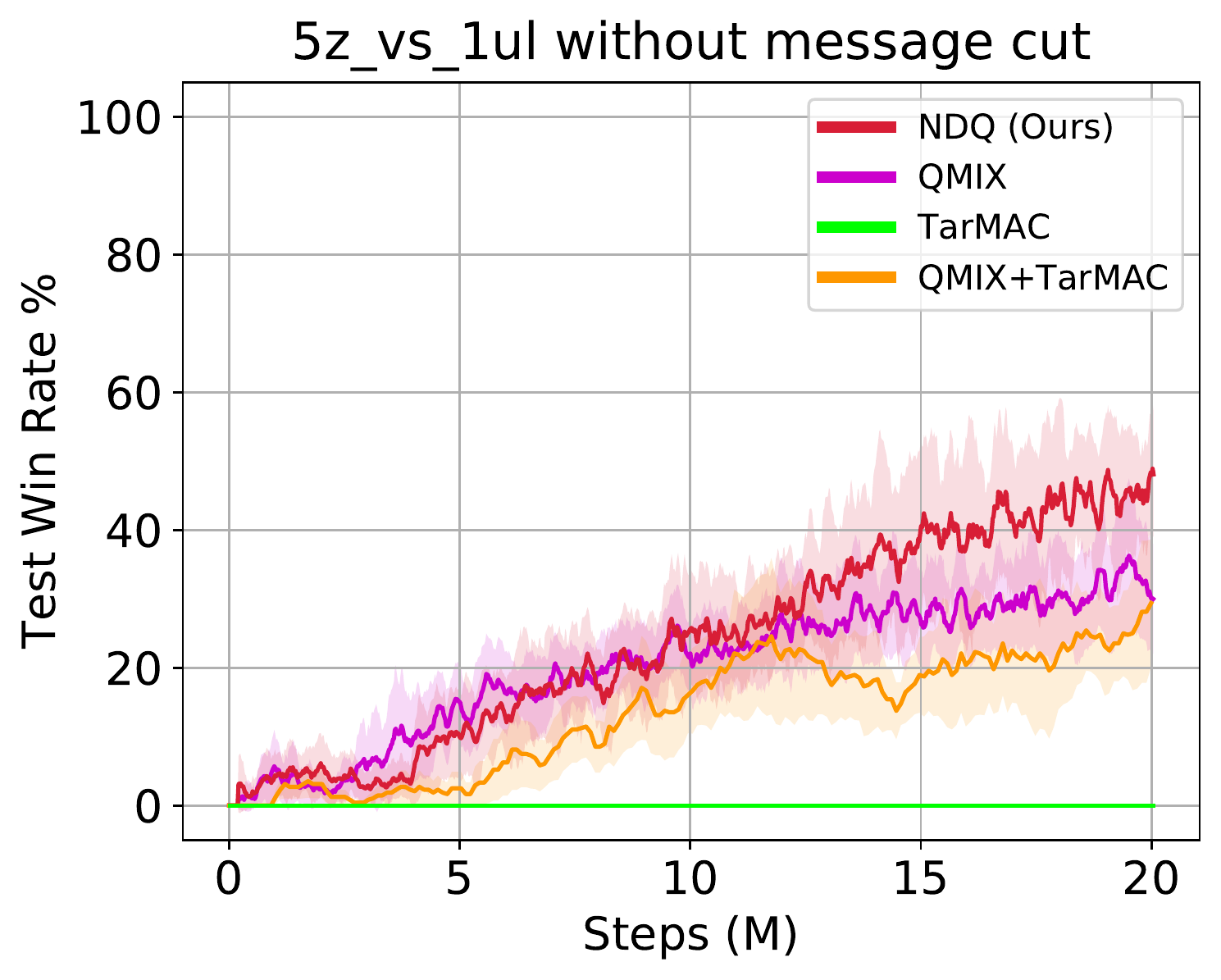}\hfill
    \includegraphics[width=0.32\linewidth]{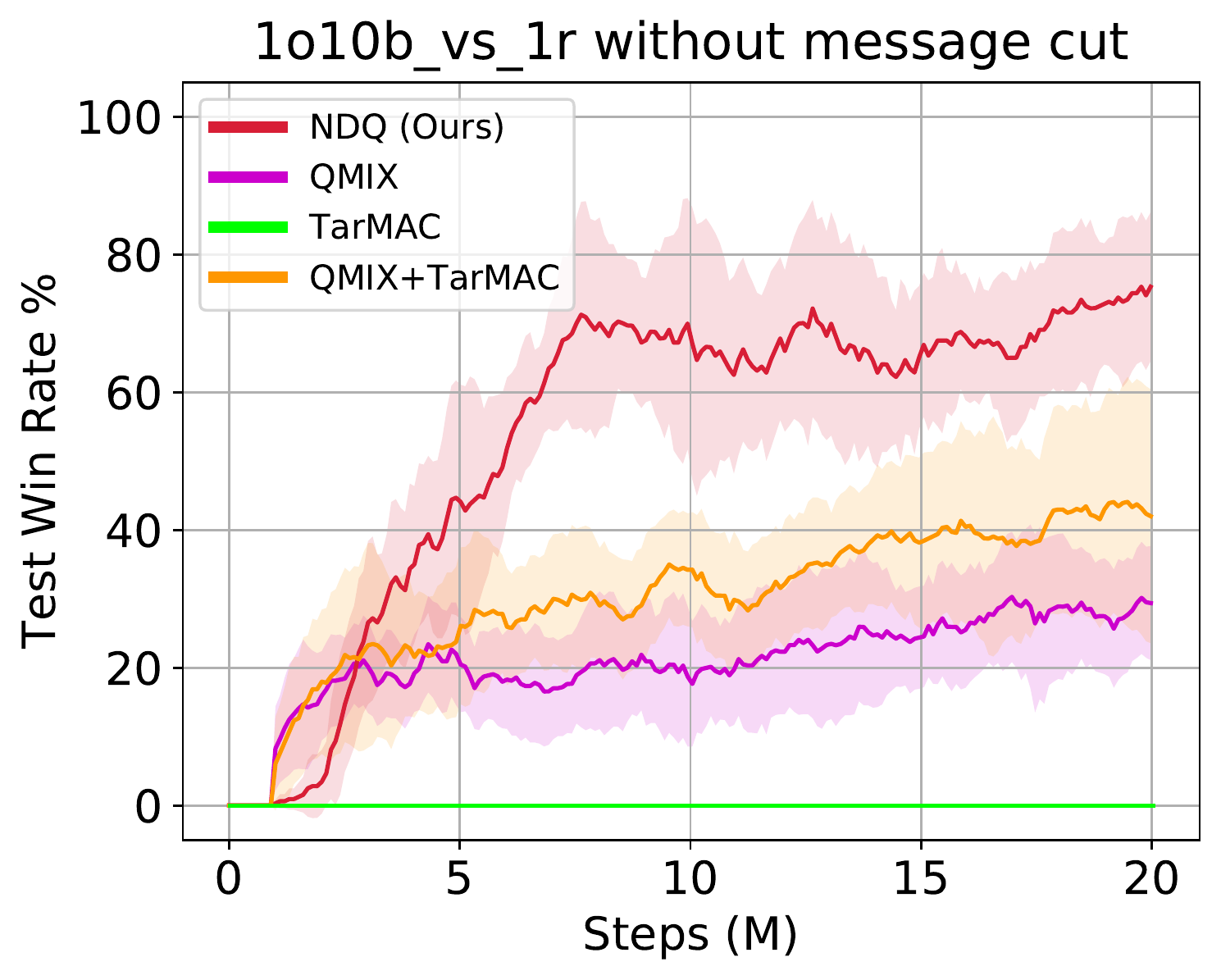}\hfill
    \includegraphics[width=0.32\linewidth]{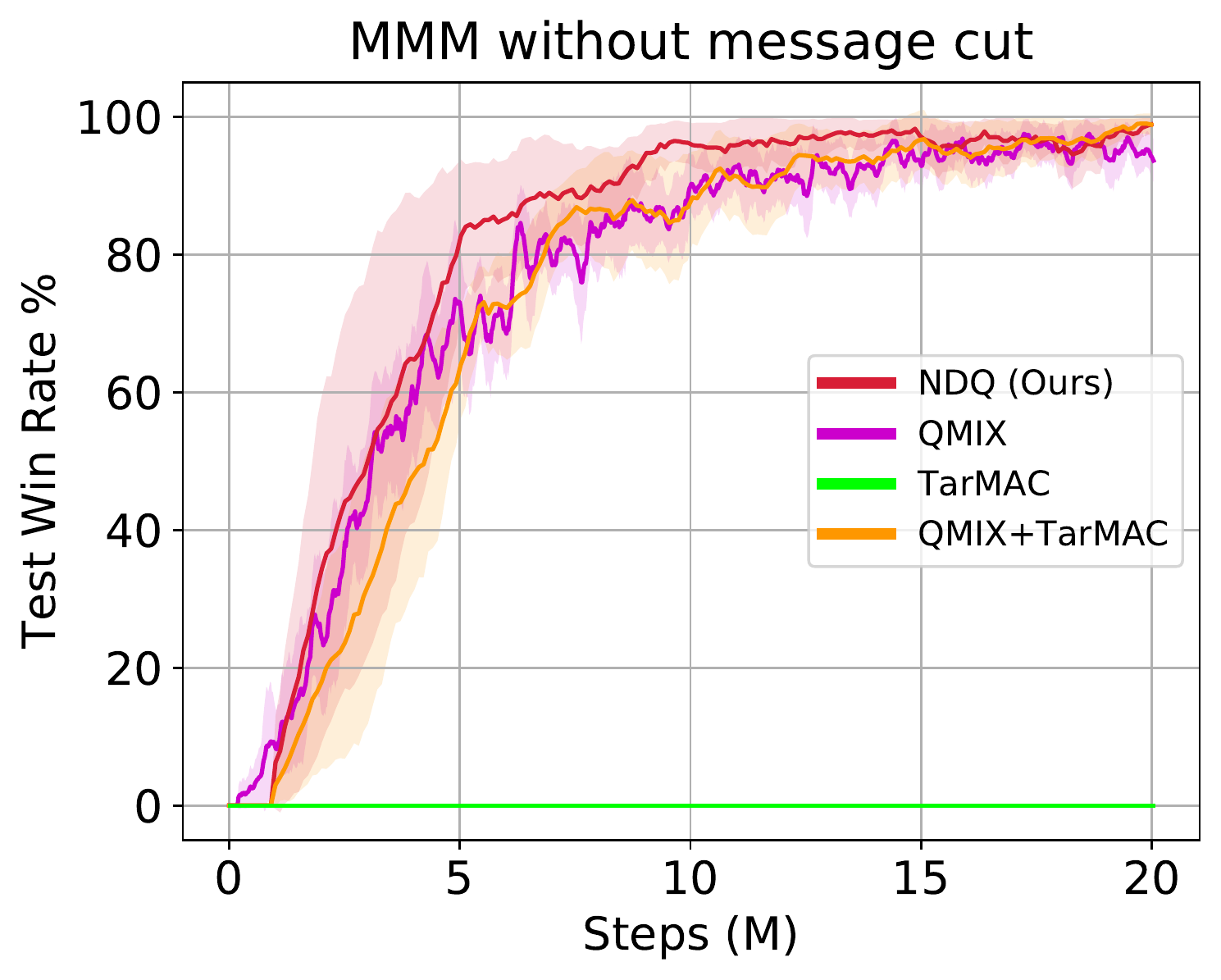}
    \caption{Learning curves of our method and baselines when no message is cut for \name~and QMIX+TarMAC.}
    \label{fig:sc2_lc_cut_0}
\end{figure}
\begin{figure}
    \centering
    \includegraphics[width=0.32\linewidth]{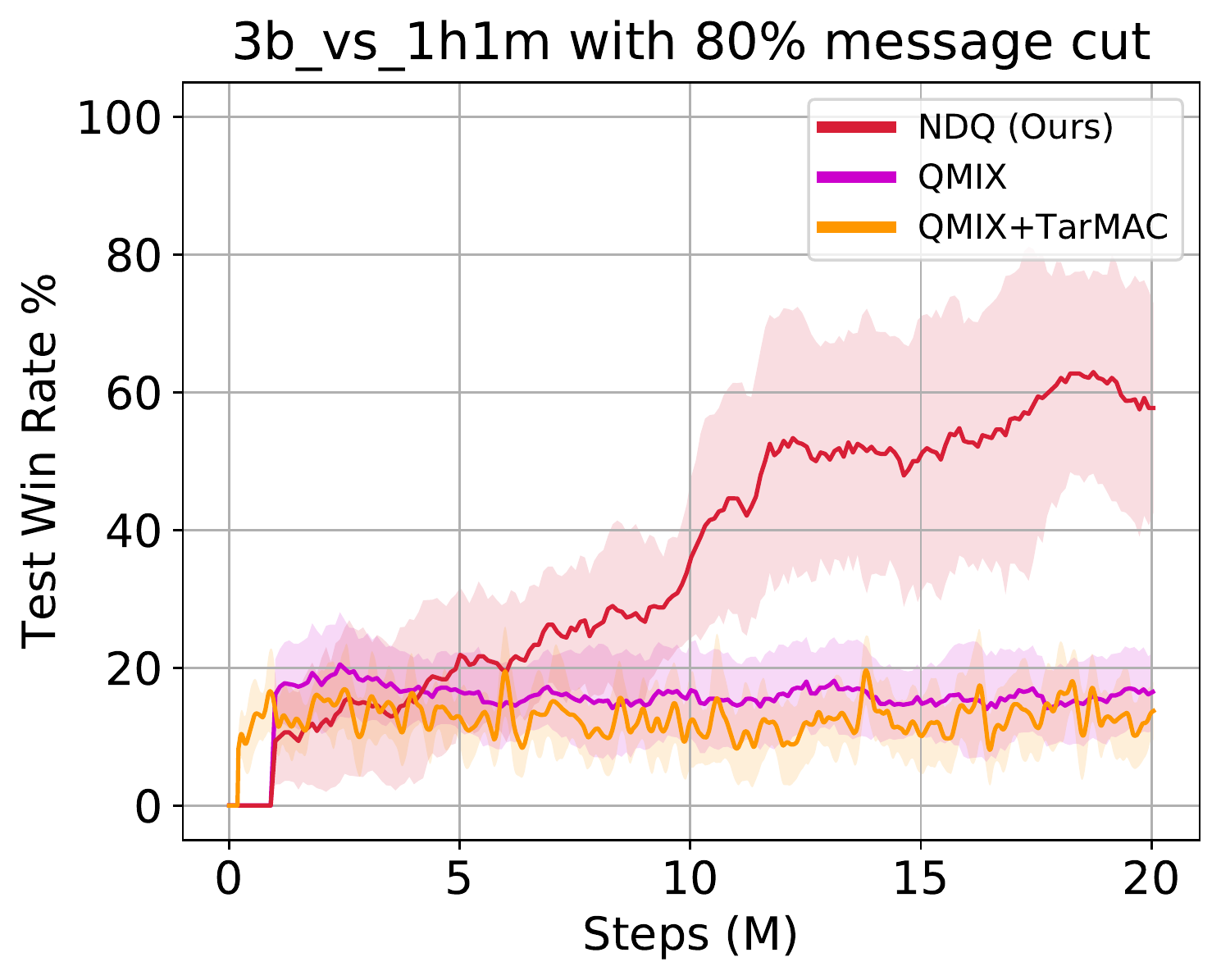}\hfill
    \includegraphics[width=0.32\linewidth]{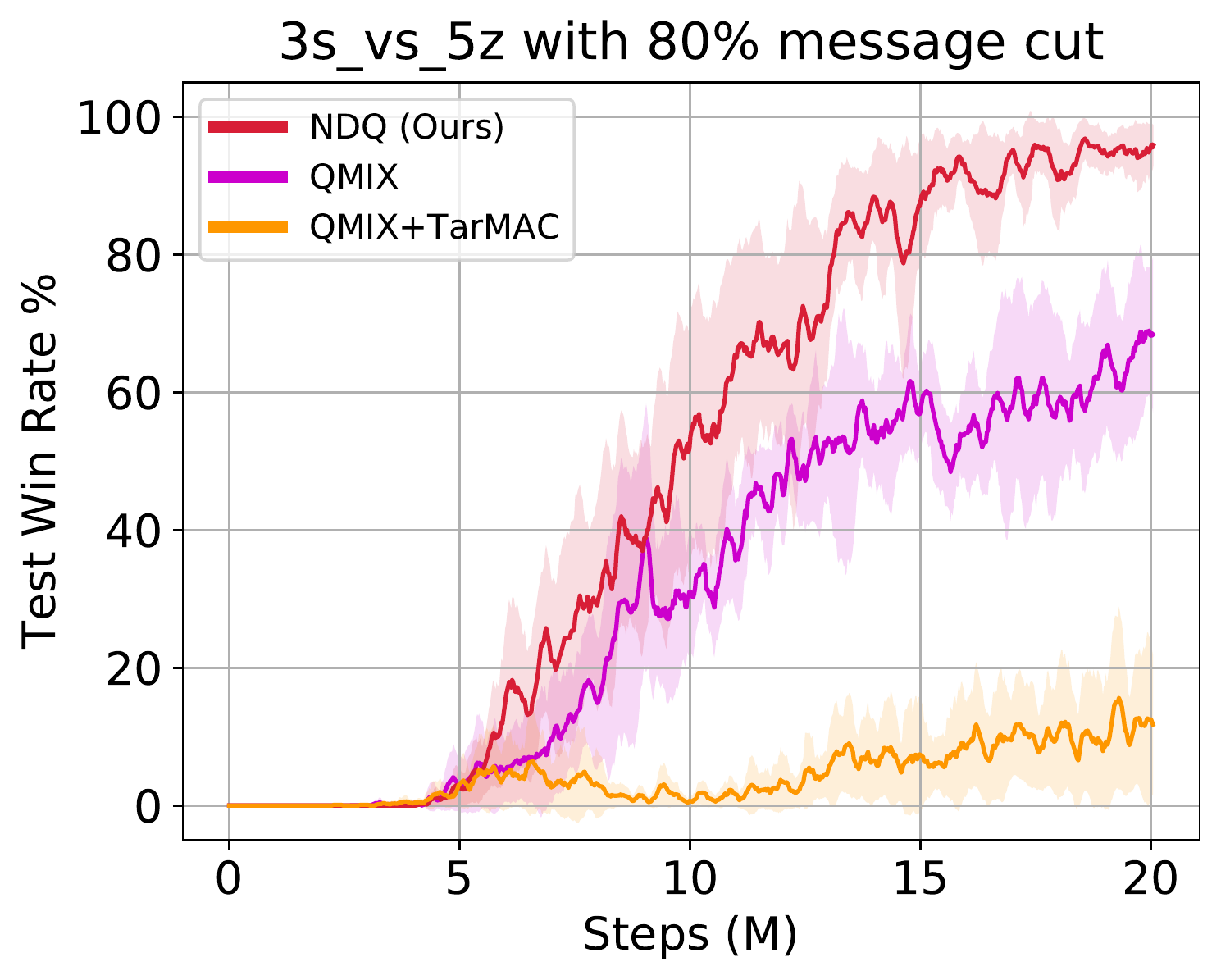}\hfill
    \includegraphics[width=0.32\linewidth]{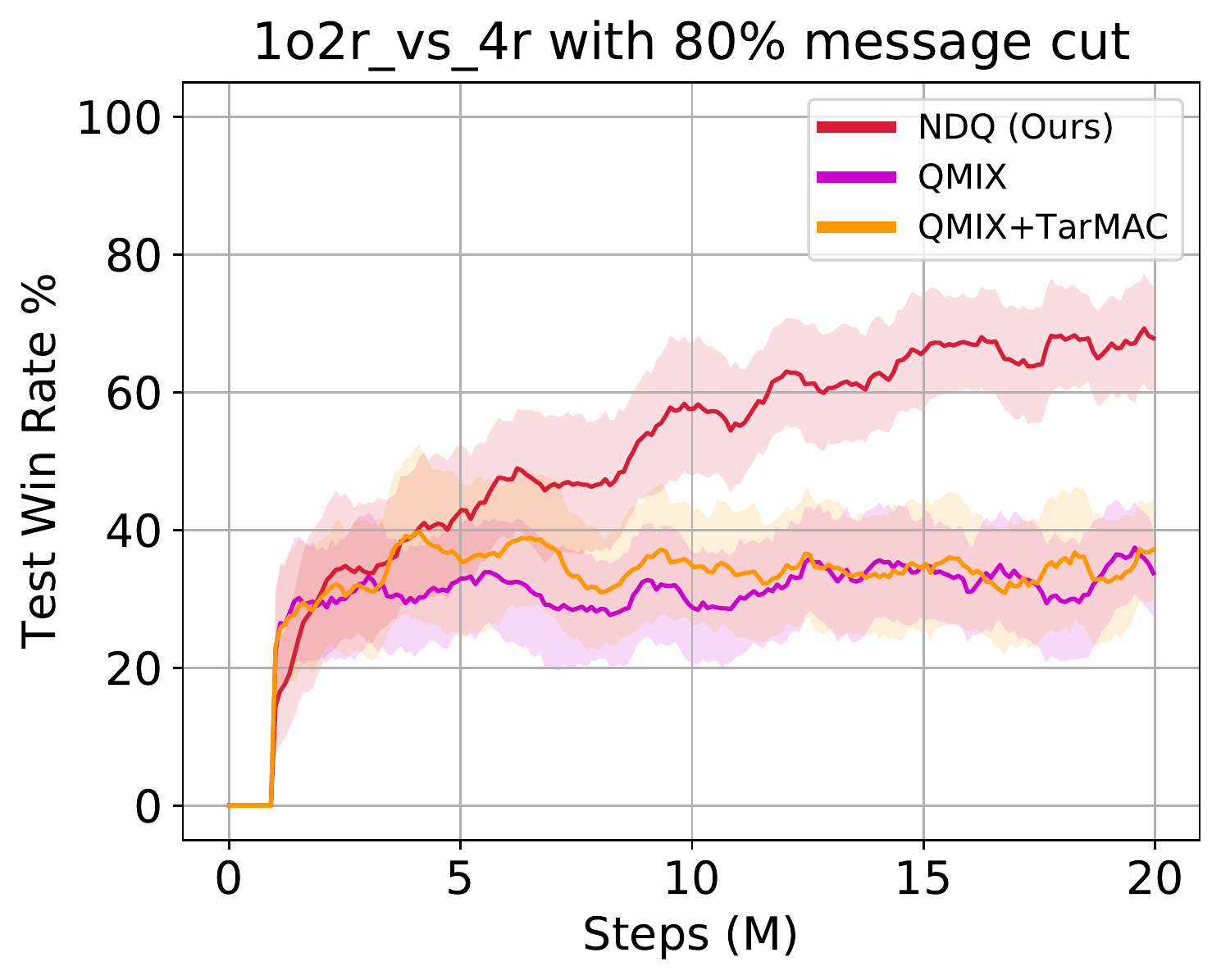}\\
    \includegraphics[width=0.32\linewidth]{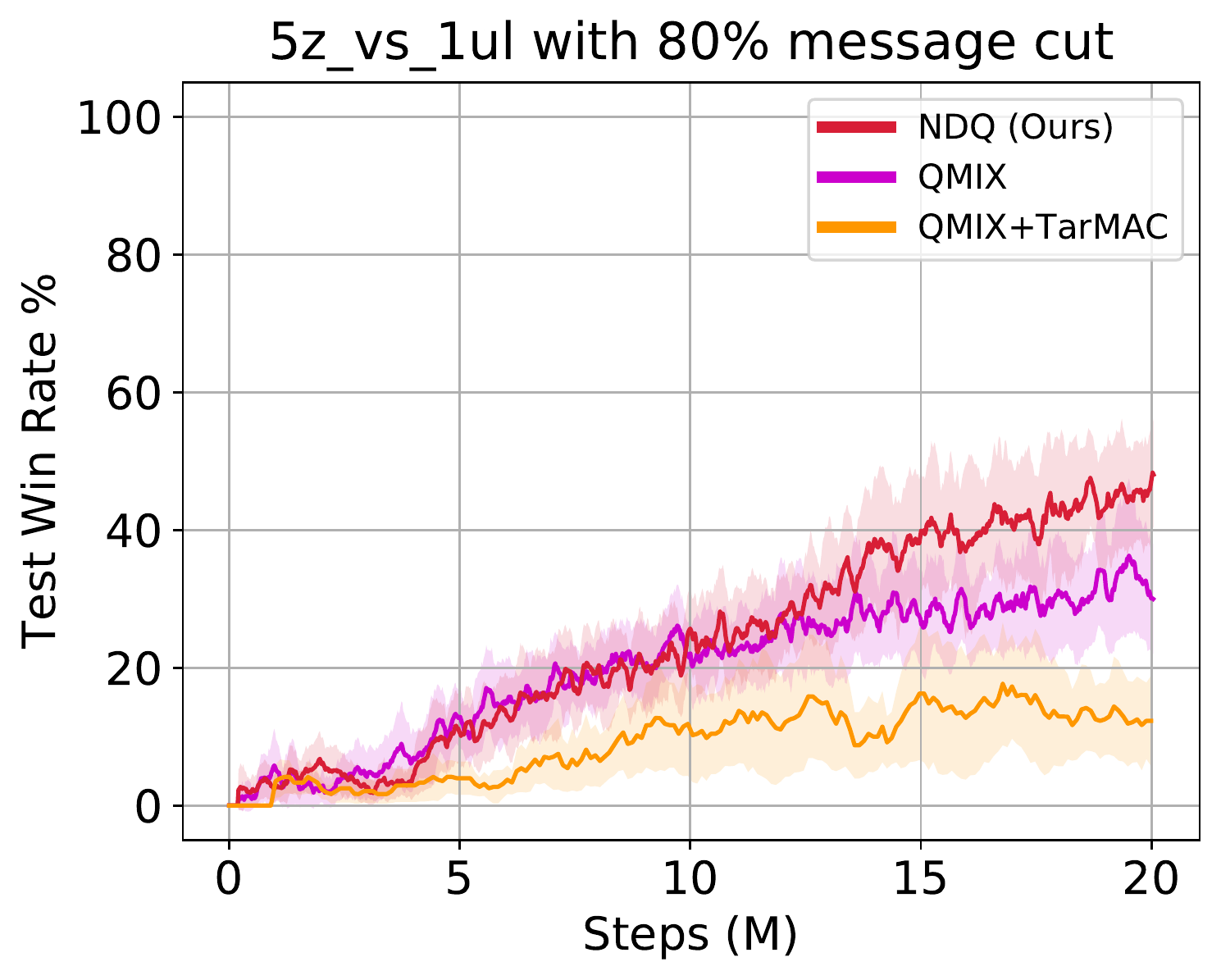}\hfill
    \includegraphics[width=0.32\linewidth]{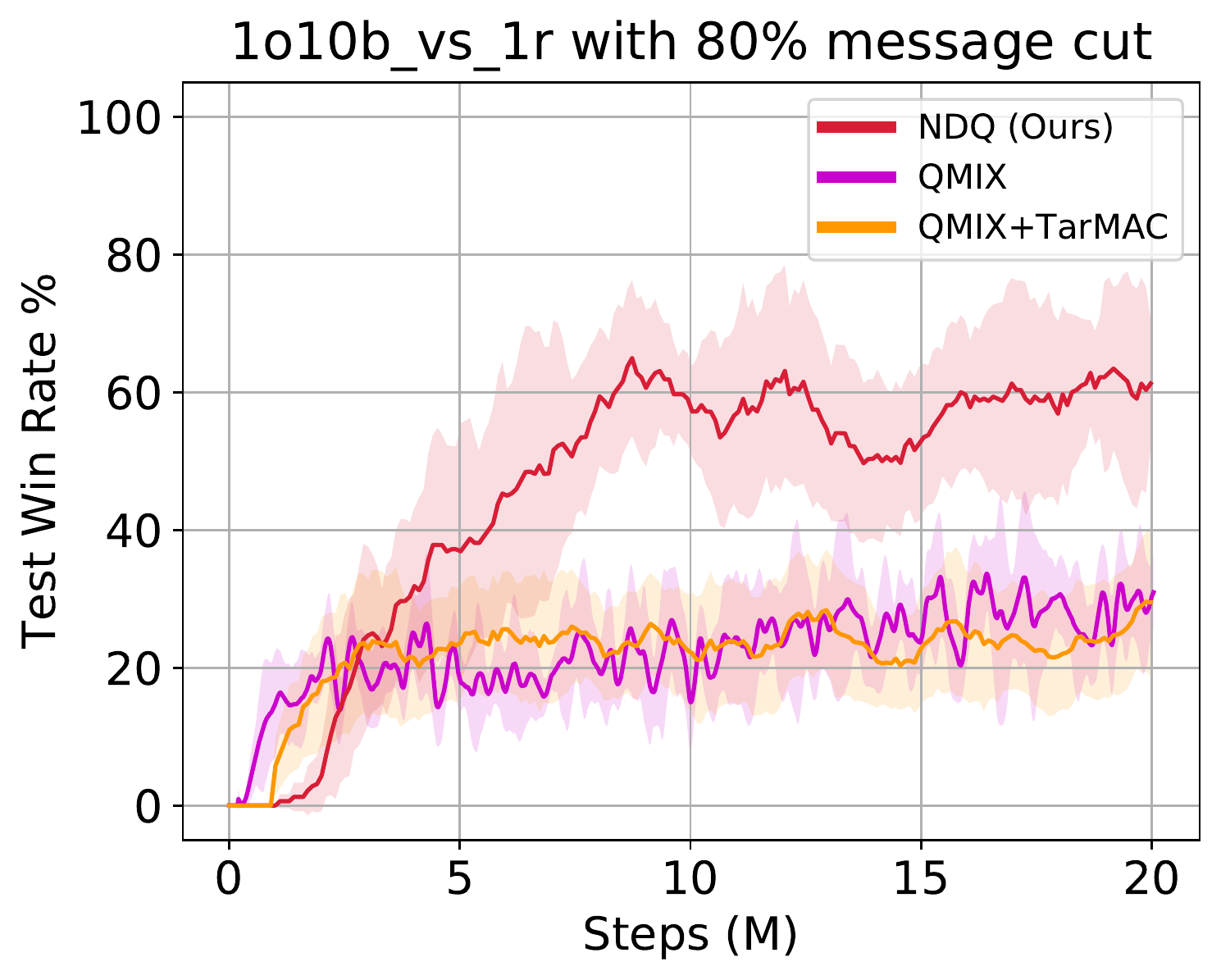}\hfill
    \includegraphics[width=0.32\linewidth]{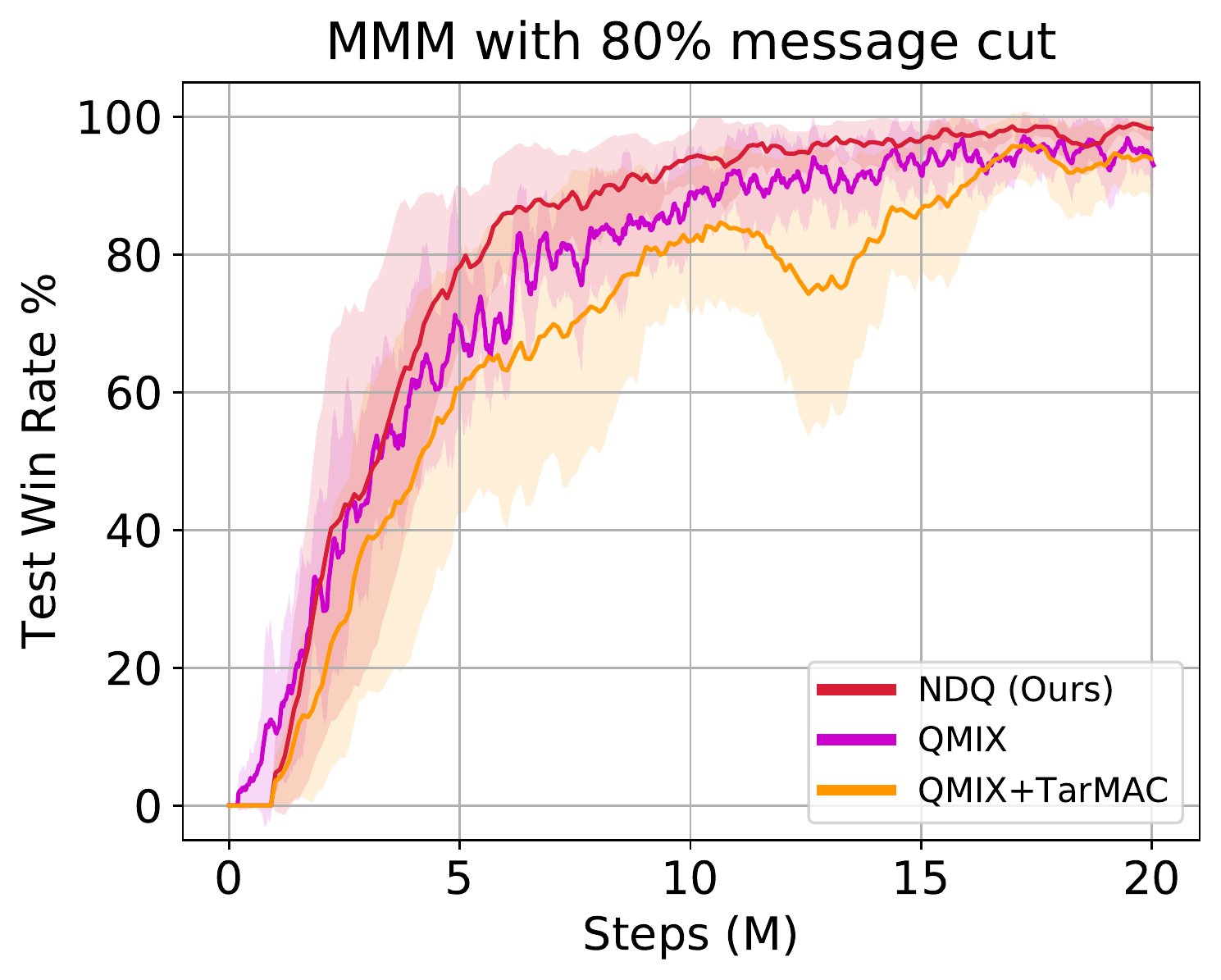}
    \caption{Performance of our method and QMIX+TarMAC when $80\%$ of messages are cut off. We also plot the learning curves of QMIX for comparison.}
    \label{fig:sc2_lc_cut_80}
\end{figure}

\subsubsection{Performance Comparison}
We show the performance of our method and baselines when no message is cut in Fig.~\ref{fig:sc2_lc_cut_0}. The superior performance of \name~against QMIX demonstrates that the miscoordination problem of full factorization methods is widespread, especially in scenarios with high stochasticity, such as $\mathtt{1o2r\_vs\_4r}$, $\mathtt{3b\_vs\_1h1m}$, and $\mathtt{1o10b\_vs\_1r}$. Notably, our method also outperforms the attentional communication mechanism (QMIX+TarMAC) by a large margin. Since agents communicate in both of these two methods and the same TD error is used, these results highlight the role of the constraints that we impose on our message embedding. TarMAC struggles in all the scenarios. We believe that this is because it does not deal with the issue of reward assignment. 
\begin{figure}
    \centering
    \includegraphics[width=0.32\linewidth]{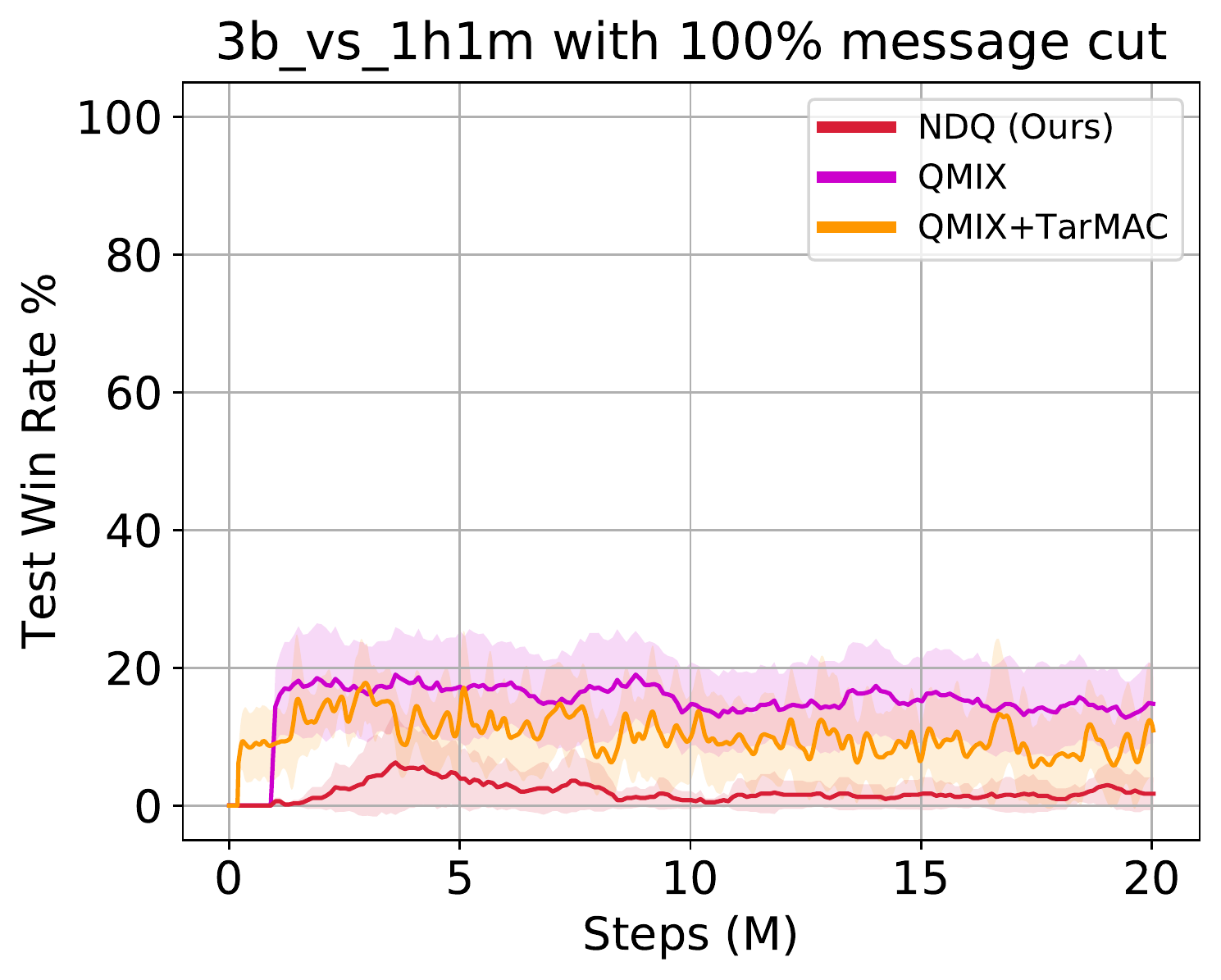}\hfill
    \includegraphics[width=0.32\linewidth]{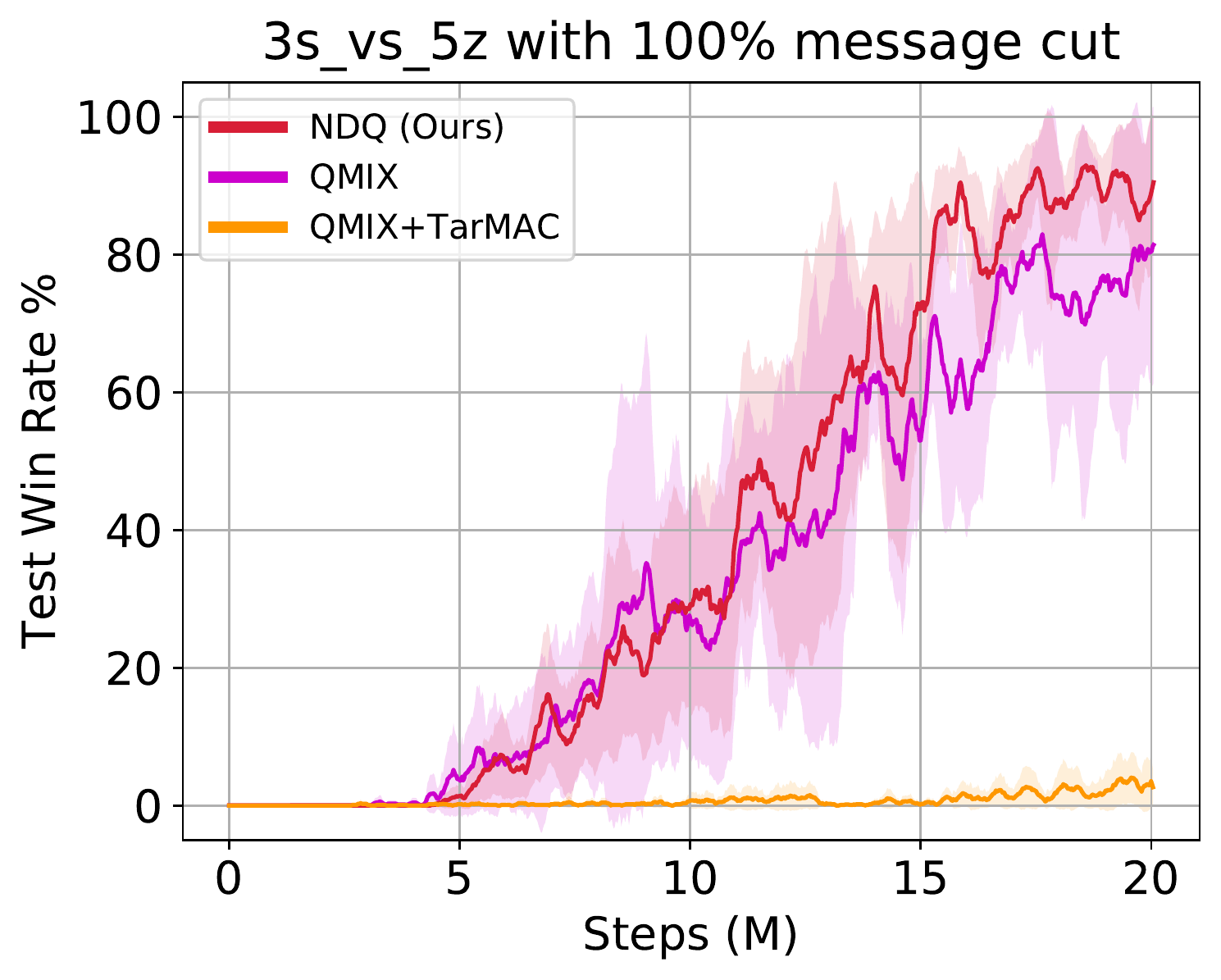}\hfill
    \includegraphics[width=0.32\linewidth]{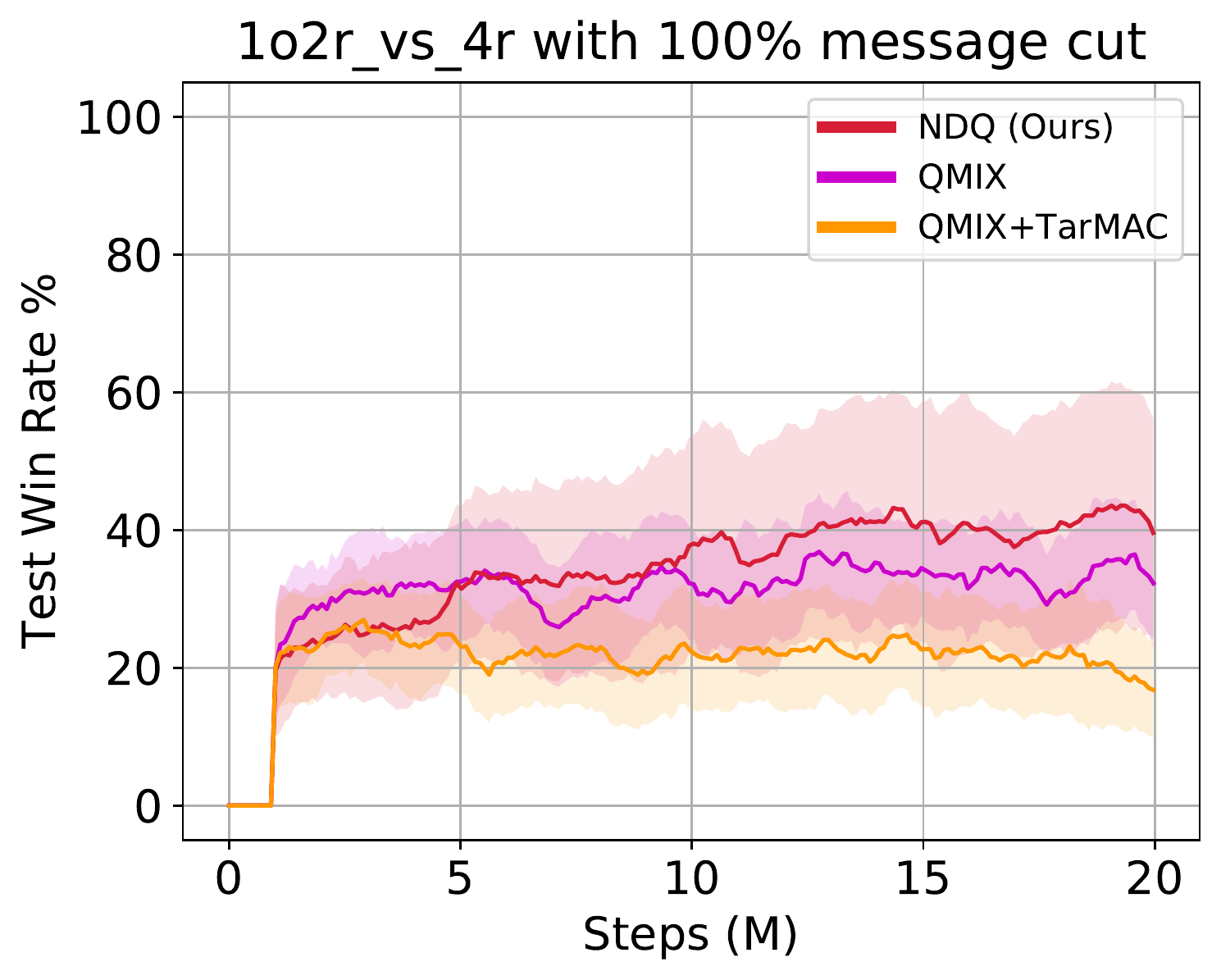}\\
    \includegraphics[width=0.32\linewidth]{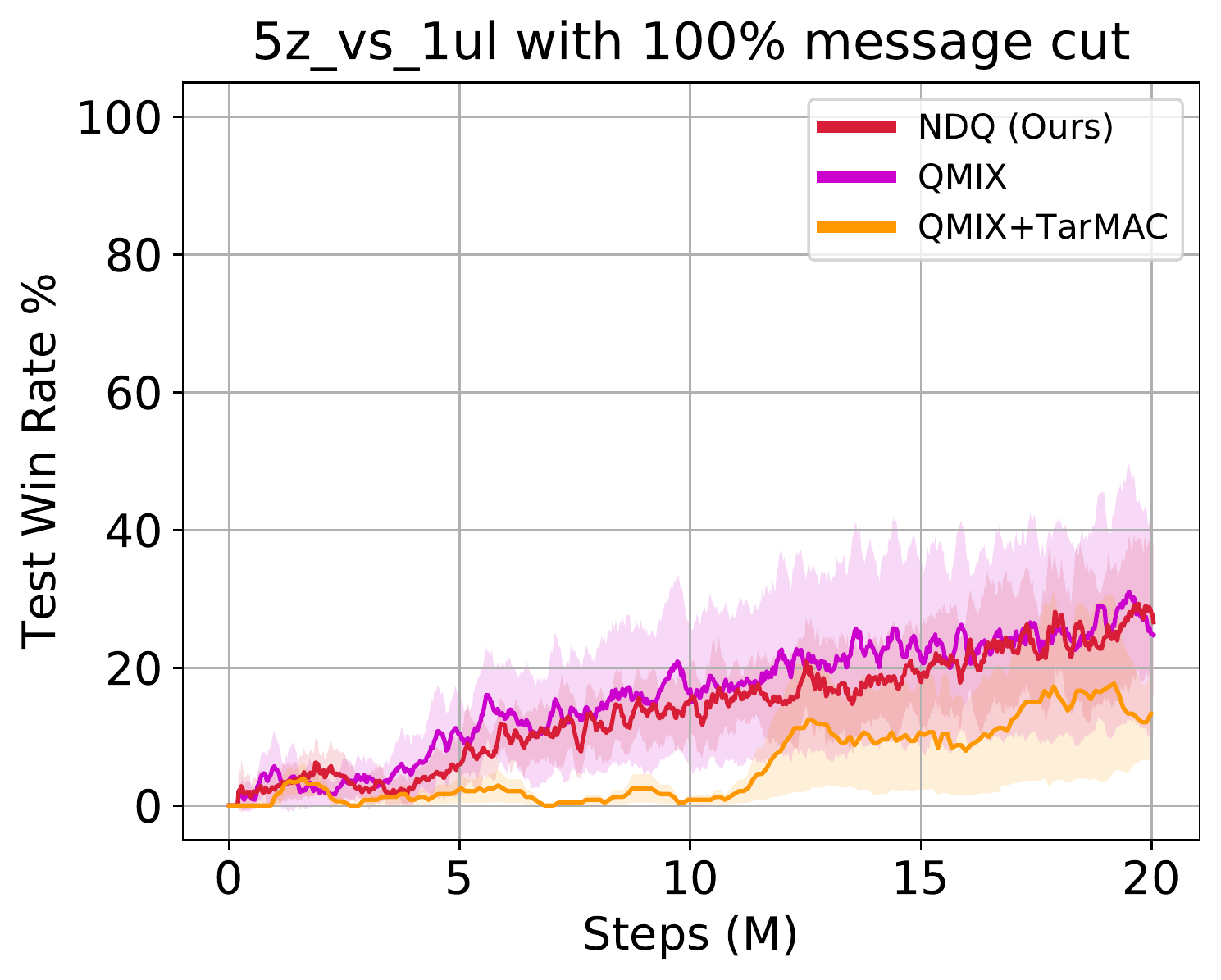}\hfill
    \includegraphics[width=0.32\linewidth]{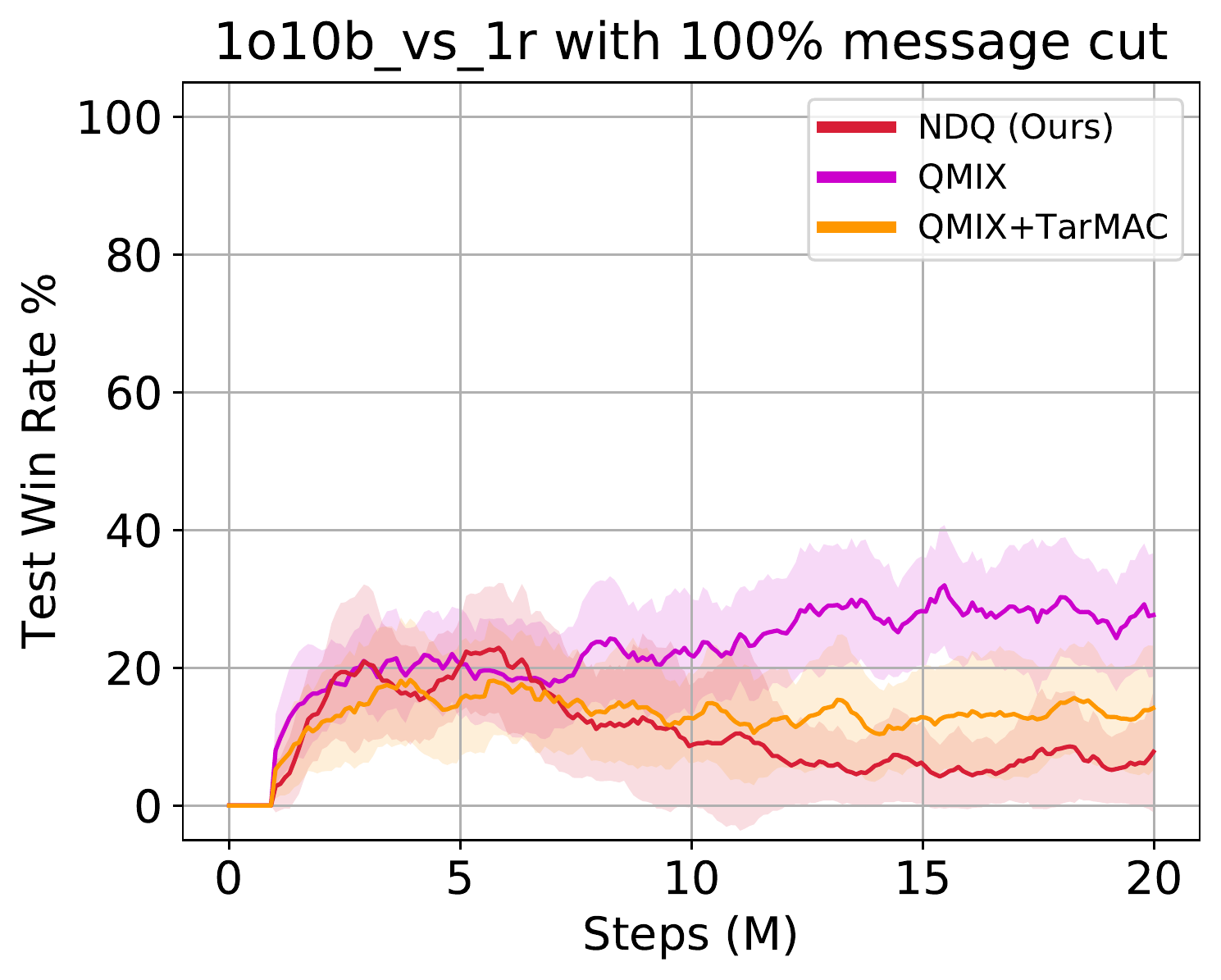}\hfill
    \includegraphics[width=0.32\linewidth]{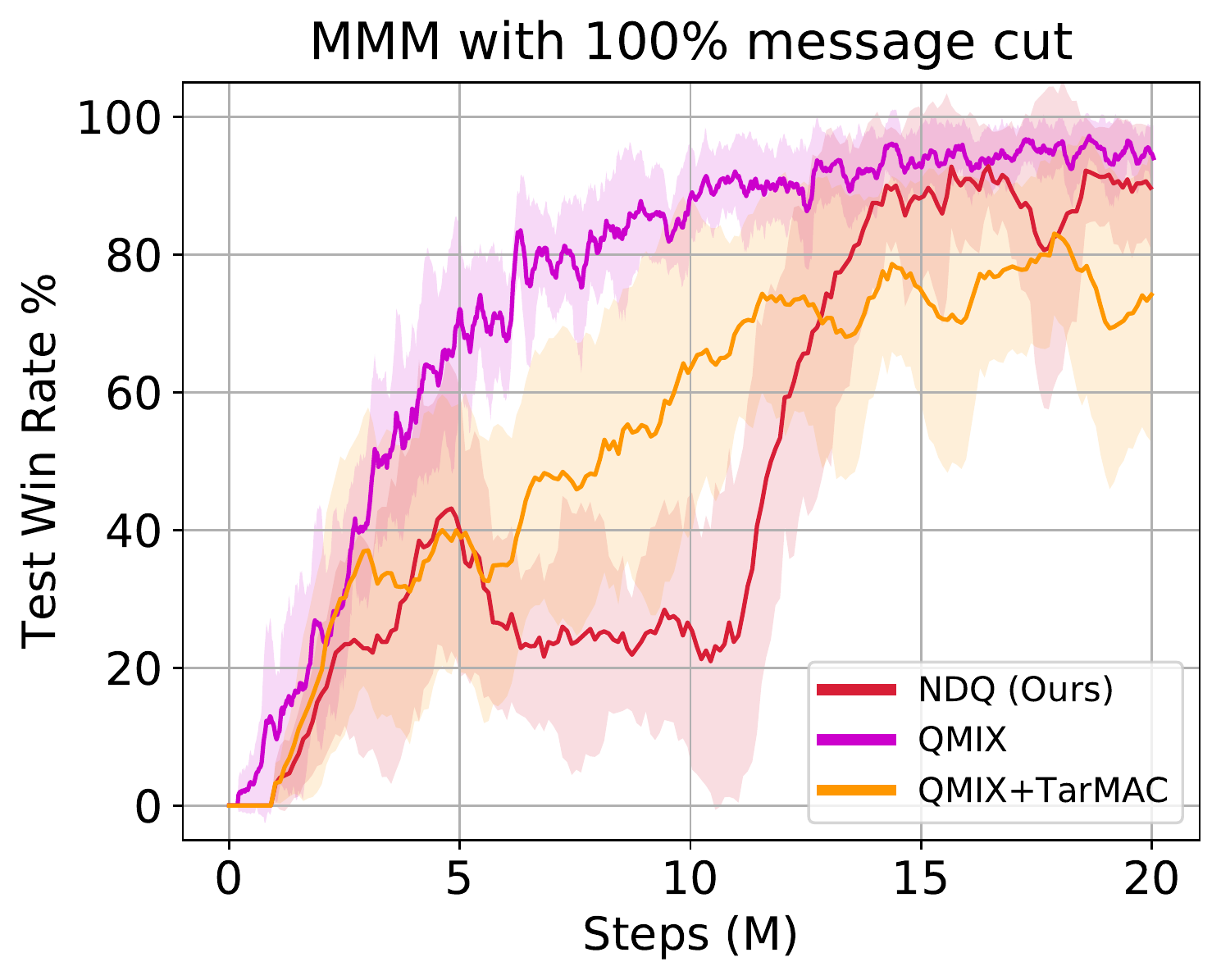}
    \caption{Performance of our method and QMIX+TarMAC when $100\%$ messages are cut off. We also plot the learning curves of QMIX for comparison.}
    \label{fig:sc2_lc_cut_100}
\end{figure}

\subsubsection{Message Cut Off}
To demonstrate that our method can learn nearly decomposable Q-functions in complex tasks, we cut off $80\%$ of messages according to the means of distributions when testing and show the results in Fig.~\ref{fig:sc2_lc_cut_80}. The results indicate that we can omit more than $80\%$ of communication without significantly affecting performance. For comparison, we cut off messages in QMIX+TarMAC whose weights are $80\%$ smallest and find that its performance drops significantly (Fig.~\ref{fig:sc2_lc_cut_80}). These results indicate that our method is more robust in terms of message cutting off compared to the attentional communication methods. 

We further drop all the messages and show the developments of testing performance in Fig.~\ref{fig:sc2_lc_cut_100}. As expected, the win rates of \name~decrease dramatically, proving that the superiority of our method when $80\%$ of messages are dropped comes from expressive and succinct communication protocols.
\section{Closing Remarks}
In this paper, we presented a novel multi-agent learning framework within the paradigm of centralized training with decentralized execution. This framework fuses value function factorization learning and communication learning and efficiently learns nearly decomposable value functions for agents to act most of the time independently and communicate when it is necessary for coordination. We introduce two information-theoretical regularizers to minimize overall communication while maximizing the message information for coordination. Empirical results in challenging StarCraft II tasks show that our method significantly outperforms baseline methods and allows us to reduce communication by more than $80\%$ without sacrificing the performance. We also observe that nearly minimal messages (e.g., with one or two bits) are learned to communicate between agents in order to ensure effective coordination.

\bibliography{iclr2020_conference}
\bibliographystyle{iclr2020_conference}


\newpage
\appendix
\section*{Appendix}

\section{Variational Bound on Mutual Information}\label{appendix:vi_bound}
In order to enable messages to effectively reduce the uncertainties in action-value functions of other agents, we propose to maximize the mutual information between $A_j$ and $M_{ij}$. We borrow ideas from the variational inference literature and derive a lower bound of this mutual information regularizer.
\begin{Theorem}
A lower bound of mutual information $I_{\bm{\theta}_c}(A_j; M_{ij} | \Tau_j, M_{(\shortn i)j})$ is
\begin{equation}
 \mathbb{E}_{\bm{\Tau} \sim \mathcal{D}, M_{j}^{in} \sim f_m(\bm{\Tau}, j;\bm{\theta}_c)} \left[ -\mathcal{CE}\left[ p(A_j | \bm{\Tau}) \Vert q_{\xi} (A_j | \Tau_j, M_{j}^{in}) \right] \right],
\end{equation}
where $\Tau_j$ is the local action-observation history of agent $j$, and $\bm{\Tau}=\langle\Tau_1, \Tau_2, \dots,\Tau_n\rangle$ is the joint local history sampled from the replay buffer $\mathcal{D}$, $q_{\xi} (A_j | \Tau_j, M_{j}^{in})$ is the variational posterior estimator with parameters $\xi$.
\end{Theorem}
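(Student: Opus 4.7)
The plan is to derive this bound using the standard Barber--Agakov variational lower bound for (conditional) mutual information, which rests on two non-negativity facts: that KL divergence is non-negative and that conditional entropy is non-negative.

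First, abbreviate $M_j^{in} = (M_{ij}, M_{(\shortn i)j})$ and decompose the conditional mutual information as a difference of conditional entropies, $I_{\bm{\theta}_c}(A_j; M_{ij} \mid \Tau_j, M_{(\shortn i)j}) = H(A_j \mid \Tau_j, M_{(\shortn i)j}) - H(A_j \mid \Tau_j, M_j^{in})$. The first term is a conditional entropy and hence non-negative, so dropping it yields
$I_{\bm{\theta}_c}(A_j; M_{ij} \mid \Tau_j, M_{(\shortn i)j}) \ge -H(A_j \mid \Tau_j, M_j^{in}) = \mathbb{E}[\log p(A_j \mid \Tau_j, M_j^{in})]$,
where the expectation is taken under the joint distribution induced by the environment dynamics and $\bm{\theta}_c$.

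Second, introduce the variational approximator $q_\xi$ and invoke $\KL\bigl(p(A_j \mid \Tau_j, M_j^{in}) \,\Vert\, q_\xi(A_j \mid \Tau_j, M_j^{in})\bigr) \ge 0$ pointwise in $(\Tau_j, M_j^{in})$, then take the outer expectation. Chaining with the preceding display gives $I_{\bm{\theta}_c}(A_j; M_{ij} \mid \Tau_j, M_{(\shortn i)j}) \ge \mathbb{E}[\log q_\xi(A_j \mid \Tau_j, M_j^{in})]$. I would then reshape the outer expectation using the generative order of the paper --- sample $\bm{\Tau}$ from the replay buffer $\mathcal{D}$, sample $M_j^{in}$ from the encoder $f_m(\bm{\Tau}, j; \bm{\theta}_c)$, and average the remaining log-probability over $A_j$ weighted by its conditional --- so that the inner sum over $A_j$ becomes the negative cross-entropy $-\mathcal{CE}[p(A_j \mid \bm{\Tau}) \,\Vert\, q_\xi(A_j \mid \Tau_j, M_j^{in})]$, producing the stated expression.

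The main obstacle I expect is the notational reconciliation in that last step: the Barber--Agakov expectation is naturally taken under $p(A_j \mid \Tau_j, M_j^{in})$, whereas the theorem writes the cross-entropy with $p(A_j \mid \bm{\Tau})$. In the paper's model, $A_j$ is produced by agent $j$'s policy depending only on $(\Tau_j, M_j^{in})$ and $M_j^{in}$ is generated from $\bm{\Tau}$ via the known stochastic encoder, so under the joint sampling scheme one can identify $p(A_j \mid \bm{\Tau})$ with the centralized-training target that the Barber--Agakov conditional marginalizes to; once this bookkeeping is handled, the rest of the argument is the routine chain of two non-negativity inequalities.
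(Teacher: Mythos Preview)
Your overall plan is the same as the paper's: both arguments use exactly the two non-negativity facts you name (KL $\ge 0$ to replace $p$ by $q_\xi$, and $H(A_j\mid \Tau_j,M_{(\shortn i)j})\ge 0$ to drop the leftover entropy). The paper applies them in the opposite order --- it first expands the mutual information, substitutes $q_\xi$ via $\KL(p\Vert q_\xi)\ge 0$, and only at the very end discards the non-negative term $H_{\bm{\theta}_c}(A_j\mid \Tau_j,M_{(\shortn i)j})$ --- but this reordering is immaterial.

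The one place where your proposal drifts from the paper is the ``notational reconciliation'' you flag. You try to pass from the Barber--Agakov conditional $p(A_j\mid \Tau_j,M_j^{in})$ to $p(A_j\mid\bm{\Tau})$ by asserting that ``$A_j$ is produced by agent $j$'s policy depending only on $(\Tau_j,M_j^{in})$''. That is the decentralized-execution independence $A_j\perp \Tau_{\shortn j}\mid(\Tau_j,M_j^{in})$, and it does \emph{not} let you replace $p(a_j\mid\bm{\tau},m_j^{in})$ by $p(a_j\mid\bm{\tau})$ inside the expectation; if $A_j$ genuinely depends on the stochastic message, then conditioning on $\bm{\Tau}$ alone does not fix $A_j$. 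The paper instead invokes the opposite independence $A_j\perp M_j^{in}\mid\bm{\Tau}$: in the centralized-training model, the target action distribution $p(a_j\mid\bm{\tau})$ is defined from the joint history, and since $M_j^{in}$ is generated from $\bm{\Tau}$ through the encoder, it carries no information about $A_j$ once $\bm{\Tau}$ is given. With that assumption one gets
\[
\int p(\bm{\tau})\,p(m_j^{in}\mid\bm{\tau})\,p(a_j\mid\bm{\tau},m_j^{in})\log q_\xi(a_j\mid\tau_j,m_j^{in})\,d\bm{\tau}\,dm_j^{in}\,da_j
=\mathbb{E}_{\bm{\Tau},M_j^{in}}\bigl[-\mathcal{CE}[p(A_j\mid\bm{\Tau})\Vert q_\xi]\bigr],
\]
which is exactly the form in the theorem. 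So the step is indeed just bookkeeping, but the conditional independence you need to cite is $A_j\perp M_j^{in}\mid\bm{\Tau}$, not the one you wrote.
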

\begin{proof}
\begin{align}
   & I_{\bm{\theta}_c}(A_j; M_{ij} | T_j, M_{(\shortn i)j}) \\
 = & \int p(a_j, \tau_j, m_{j}^{in}) \log\frac{p(a_j, m_{ij} | \tau_j, m_{(\shortn i)j})}{p(a_j | \tau_j, m_{(\shortn i)j}) p(m_{ij}|\tau_j, m_{(\shortn i)j})} da_j d\tau_j dm_{j}^{in}\\
 = & \int p(a_j, \tau_j, m_{j}^{in}) \log \frac{p(a_j| \tau_j, m_{j}^{in})}{p(a_j | \tau_j, m_{(\shortn i)j})} da_j d\tau_j dm_{j}^{in},
\end{align}
where $p(a_j| \tau_j, m_{j}^{in})$ is determined by the message encoder $f_m$ and Markov Chain:
\begin{align}
    & p(a_j| \tau_j, m_{j}^{in}) \\
    = & \int p(\tau_{-j}, a_j| \tau_j, m_{j}^{in}) d\tau_{-j} \\
    = & \int p(\tau_{-j} | \tau_j, m_{j}^{in}) p(a_j | \bm{\tau}) d\tau_{-j} \textbf{~~~~}\left(\text{According to }\left[a_j \perp m_{j}^{in} |\bm{\tau}\right]\right)\\
    = & \int \frac{p(\bm{\tau}) p(m_{j}^{in} | \bm{\tau}) p(a_j | \bm{\tau})}{p(\tau_j, m_{j}^{in})}d\tau_{-j}.
\end{align}

We introduce $q_\xi (a_j| \tau_j, m_{j}^{in})$ as a variational approximation to $p(a_j| \tau_j, m_{j}^{in})$. Since
\begin{equation}
    \KL (p(a_j| \tau_j, m_{j}^{in}) \Vert q_\xi (a_j| \tau_j, m_{j}^{in}) \ge 0,
\end{equation}
we have
\begin{align}
    & \int p(a_j| \tau_j, m_{j}^{in}) \log p(a_j| \tau_j, m_{j}^{in}) da_j\\
\ge & \int p(a_j| \tau_j, m_{j}^{in}) \log q_\xi (a_j| \tau_j, m_{j}^{in}) da_j.
\end{align}
Thus, for the mutual information term:
\begin{align}
     & I_{\bm{\theta}_c}(A_j; M_{ij} | T_j, M_{(\shortn i)j}) \\
 \ge & \int p(a_j, \tau_j, m_{j}^{in}) \log \frac{q_\xi(a_j| \tau_j, m_{j}^{in})}{p(a_j | \tau_j, m_{(\shortn i)j})} da_j d\tau_j dm_{j}^{in} \\
 = & \int p(a_j, \tau_j, m_{j}^{in}) \log q_\xi(a_j| \tau_j, m_{j}^{in}) da_j d\tau_j dm_{j}^{in} \\
 & - \int p(a_j, \tau_j, m_{j}^{in}) \log p(a_j | \tau_j, m_{(\shortn i)j}) da_j d\tau_j dm_{j}^{in} \\
 = & \int p(\bm{\tau}) p(m_{j}^{in} | \bm{\tau}) p(a_j| \bm{\tau}, m_{j}^{in}) \log q_\xi(a_j| \tau_j, m_{j}^{in}) da_j d\bm{\tau} dm_{j}^{in} \\
 & - \int p(a_j, \tau_j, m_{(\shortn i)j}) \log p(a_j | \tau_j, m_{(\shortn i)j}) da_j d\tau_j dm_{(\shortn i)j} \\
 = & \int p(\bm{\tau}) p(m_{j}^{in} | \bm{\tau}) p(a_j| \bm{\tau}) \log q_\xi(a_j| \tau_j, m_{j}^{in}) da_j d\bm{\tau} dm_{j}^{in} \textbf{~~~~}\left(\text{According to }\left[a_j \perp m_{j}^{in} |\bm{\tau}\right]\right)\\
 & + H_{\bm{\theta}_c}(A_j | \Tau_j, M_{(\shortn i)j}) \\
 = & \ \mathbb{E}_{\bm{\Tau} \sim \mathcal{D}, M_{j}^{in} \sim f_m(\bm{\Tau}, j;\bm{\theta}_c)} \ \left[ \int p(a_j | \bm{\Tau}) \log q_{\xi} (a_j | \Tau_j, M_{j}^{in}) da_j \right] \\
   & + H_{\bm{\theta}_c}(A_j | \Tau_j, M_{(\shortn i)j}) \\
 = & \ \mathbb{E}_{\bm{\Tau} \sim \mathcal{D}, M_{j}^{in} \sim f_m(\bm{\Tau}, j;\bm{\theta}_c)}\ \left[ -\mathcal{CE}\left[ p(A_j | \bm{\Tau}) \Vert q_{\xi} (A_j | \Tau_j, M_{j}^{in}) \right] \right] \\
   & + H_{\bm{\theta}_c}(A_j | \Tau_j, M_{(\shortn i)j}).
\end{align}
Because $H_{\bm{\theta}_c}(A_j | \Tau_j, M_{(\shortn i)j}) \ge 0$, we get the lower bound in Theorem 1. 

\end{proof}

\section{Implementation Details}\label{appendix:implementation}
\subsection{Details of Message Dropping}
In our methods, not only the number of messages but also the length of messages are minimized. In other words, we send messages with varying lengths. However, messages are feed into an action-value function approximator at the recipient side, which requires inputs to have the same length. To solve this problem, we send masks indicating which bits are dropped along with the messages. To save channel bandwidth, masks are regarded as binary numbers, so each of them only consumes a negligible $\log$-scale space compared to the length of messages. For the unsent bits, we fill in 0s before feeding the messages into the local utility functions.

\subsection{Network Architecture, Hyperparameters, and Infrastructure}\label{appendix:architecture}
We base our implementation on the PyMARL framework~\citep{samvelyan2019starcraft} and use its default network structure and hyper-parameter setting for QMIX. For the message encoder, we use a fully connected network with one 64-dimensional hidden layer and ReLU activation. For the posterior estimator $q_\xi$, we adopt a fully connected network with two 20-dimensional hidden layers with ReLU activation. We train our models on NVIDIA RTX 2080Ti GPUs using experience sampled from 16 parallel environments. To benchmark \name, we train all algorithms for 20 million time steps on each StarCraft II unit micromanagement task and use the default hyper-parameter settings for baselines.


\section{Experimental Results}\label{appendix:more_experiments}
\subsection{Didactic example: independent search}\label{appendix:more_experiments_didactic}
In \textbf{independent search}, two agents are finding landmarks in two independent $5\times 5$ rooms for 100 time steps (see Fig.~\ref{fig:env:independent-search}). An agent is rewarded 1 when it is on the landmark in its room.
\begin{figure}
    \centering
    \includegraphics[width=0.5\linewidth]{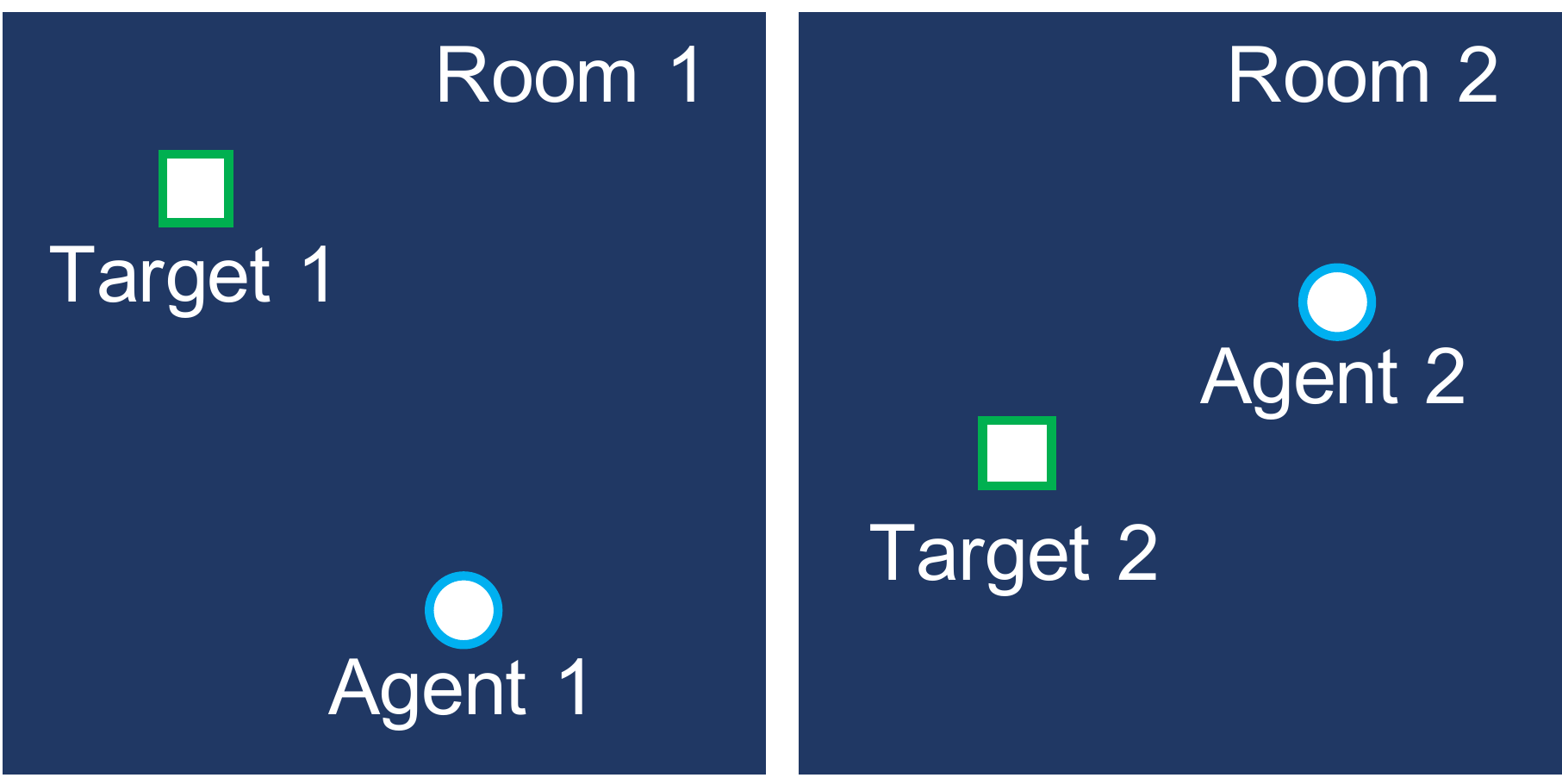}
    \caption{Task \textbf{Independent-search}. Two agents are both reward- and transition-independent.}
    \label{fig:env:independent-search}
\end{figure}

Independent search is an example where agents are totally independent. This task aims to demonstrate that our method can learn not to communicate in independent scenarios. We show team performance in Table~\ref{tab:results:independent-search}. \name~can achieve the optimal performance when agents do not communicate with each other.
\begin{table}[h!]
    \centering
    \caption{The average team reward gained in an episode on the task \emph{independent-search}.}
    \begin{tabular}{c|cccc}
    \hline
         & Ours & QMIX & TarMAC & TarMAC + QMIX\\
    \hline
    No message is cut   & 96.0 & 96.0 & 96.0   & 96.0   \\
    100\% messages are cut  & 96.0 & ---  & ---    & 96.0  \\
    \hline
    \end{tabular}
    \label{tab:results:independent-search}
\end{table}

\subsection{StarCraft II}\label{appendix:more_experiments_sc2}
StarCraft unit micromanagement has attracted lots of research interests for its high degree of control complexity and environmental stochasticity. \citet{usunier2017episodic} and~\citet{peng2017multiagent} study this problem from a centralized perspective. In order to facilitate decentralized control, we use the setup introduced by~\citet{samvelyan2019starcraft}.

We first describe the scenarios that we consider in detail. We consider combat scenarios where the enemy units are controlled by StarCraft II built-in AI (difficulty level is set to $\mathtt{medium}$), and each of the ally units is controlled by a learning agent. The units of the two groups can be asymmetric, and the initial placement is random. At each time step, each agent chooses one action from the discrete action space consisting of the following actions: $\mathtt{noop}$, $\mathtt{move[direction]}$, $\mathtt{attack[enemy\_id]}$, and $\mathtt{stop}$. Under the control of these actions, agents move and attack in a continuous map. A global reward that is equal to the total damage dealt on the enemy units is given at each timestep. Killing each enemy unit and winning a combat induce extra bonuses of 10 and 200, respectively. 

\textbf{3b\_vs\_1h1m}: 3 Banelings try to kill a Hydralisk assisted by a Medivac. 3 Banelings together can just blow up the Hydralisk. Therefore, they should not give the Hydralisk rest time during which the Medivac can restore its health. Banelings have to attack at the same time to get the winning reward. This scenario is designed to test whether our method can learn a communication protocol to coordinate actions.

\textbf{3s\_vs\_5z}: 3 Stalkers encounter 5 Zealots on a map. Zealots can cause high damage but are much slower so that Stalkers have to take advantage of a technique called \emph{kiting} -- Stalkers should alternatively attack the Zealots and flee for a distance.

\textbf{1o2r\_vs\_4r}: An Overseer has found 4 Reapers. Its ally units, 2 Roaches, need to get there and kill the Reapers to win. At the beginning of an episode, the Overseer and Reapers spawn at a random point on the map while the Roaches are initialized at another random point. Given that only the Overseer knows the position of the enemy, a learning algorithm has to learn to deliver this message to the Roaches to effectively win the combat.

\textbf{5z\_vs\_1ul}: 5 Zealots try to kill a powerful Ultralisk. A sophisticated micro-trick demanding right positioning and attack timing has to be learned to win.

\textbf{MMM}: Symmetric teams consisting of 7 Marines, 2 Marauders, and 1 Medivac spawn at two fixed points, and the enemy team is tasked to attack the ally team. To win the battle, agents have to learn to communicate their health to the Medivac.

\textbf{1o10b\_vs\_1r}: In a map full of cliffs, an Overseer detects a Roach. The teammates of the Overseer, 10 Banelings, need to kill this Roach to get the winning reward. The Overseer and the Roach spawn at a random point while the Banelings spawn randomly on the map. In the minimized communication strategy, the Banelings can keep silent, and the Overseer needs to encode its position and send it to the Banelings. We use this task to test the performance of our method in complex scenarios.



\end{document}